\documentclass{article}
\usepackage{graphicx} 
\usepackage[margin=1in]{geometry}

\usepackage{xcolor}

\usepackage{dsfont}
\usepackage{lineno}
\usepackage{wrapfig}
\usepackage{lmodern}
\usepackage[utf8]{inputenc} 
\usepackage[T1]{fontenc}        
\usepackage{url}            
\usepackage{booktabs, multirow, array, caption}       
\usepackage{amsfonts}       
\usepackage{nicefrac}     
\usepackage[tracking=true]{microtype}     
\usepackage{xcolor}     
\usepackage{subcaption}
\usepackage{bbm}
\usepackage{mathtools}
\usepackage{amsthm}
\usepackage{enumitem}
\usepackage{natbib}
\usepackage{algorithmic}
\usepackage{colortbl}

\definecolor{rpurple}{RGB}{120, 50, 180}

\usepackage{algorithm}

\newcommand{\diff}{{\rm d}}

\usepackage[colorlinks,linkcolor=orange, anchorcolor=blue, citecolor=blue]{hyperref}
\theoremstyle{definition}
\newtheorem{theorem}{Theorem}[section]

\newtheorem{lemma}[theorem]{Lemma}

\newtheorem{definition}[theorem]{Definition}
\newtheorem{assumption}[theorem]{Assumption}

\newcommand{\pathmeasure}{\PP}
\newcommand{\pathdensity}{p}

\renewcommand{\algorithmicrequire}{\textbf{Input:}}
\renewcommand{\algorithmicensure}{\textbf{Output:}}
\usepackage{smile}
\allowdisplaybreaks[4]
\definecolor{lightpurple}{RGB}{170,140,255}

\definecolor{lightgray}{gray}{0.65} 

\newcommand{\algname}{\texttt{DOIT}}

\title{%
\begingroup
\microtypesetup{tracking=true}
\textls[-10]{Training-Free Adaptation of Diffusion Models via Doob's $h$-Transform}%
\endgroup
}
\author{
  Qijie Zhu\footnotemark[2]
  \footnotemark[1]
  \and
  Zeqi Ye\footnotemark[3]
  \footnotemark[1]
  \and
  Han Liu\footnotemark[2]
  \footnotemark[4]
  \and
  Zhaoran Wang\footnotemark[3]
  \and
  Minshuo Chen\footnotemark[3]
}
\date{}
\begin{document}
\renewcommand{\thefootnote}{\fnsymbol{footnote}}
\maketitle

\footnotetext[1]{Equal contribution.}
\footnotetext[2]{Department of Statistics and Data Science, Northwestern University. \texttt{qijiezhu2029@u.northwestern.edu}}

\footnotetext[3]{Department of Industrial Engineering and Management Sciences, Northwestern University.
\texttt{zeqiye2029@u.northwestern.edu, zhaoran.wang@u.northwestern.edu, minshuo.chen@northwestern.edu}}
\footnotetext[4]{Department of Computer Science, Northwestern University. \texttt{hanliu@northwestern.edu}}

\begin{abstract}
\noindent Adaptation methods have been a workhorse for unlocking the transformative power of pre-trained diffusion models in diverse applications. Existing approaches often abstract adaptation objectives as a reward function and steer diffusion models to generate high-reward samples. However, these approaches can incur high computational overhead due to additional training, or rely on stringent assumptions on the reward such as differentiability. Moreover, despite their empirical success, theoretical justification and guarantees are seldom established. In this paper, we propose \algname{} (\textbf{D}oob-\textbf{O}riented \textbf{I}nference-time \textbf{T}ransformation), a training-free and computationally efficient adaptation method that applies to generic, non-differentiable rewards. The key framework underlying our method is a measure transport formulation that seeks to transport the pre-trained generative distribution to a high-reward target distribution. We leverage Doob's $h$-transform to realize this transport, which induces a dynamic correction to the diffusion sampling process and enables efficient simulation-based computation without modifying the pre-trained model. Theoretically, we establish a high probability convergence guarantee to the target high-reward distribution via characterizing the approximation error in the dynamic Doob's correction. Empirically, on D4RL offline RL benchmarks, our method consistently outperforms state-of-the-art baselines while preserving sampling efficiency. Code: \url{https://github.com/liamyzq/Doob_training_free_adaptation}.
\end{abstract}

\section{Introduction}
Diffusion models have recently become a leading class of generative models, achieving state-of-the-art performance across a wide range of applications, including image generation \citep{song2019generative,song2020denoising,song2020score,ho2020denoising,kong2020diffwave,jeong2021diff,mittal2021symbolic,huang2022prodiff,ulhaq2022efficient,avrahami2022blended}, molecular design \citep{weiss2023guided,guo2024diffusion}, and robotics \citep{chi2025diffusion,reuss2023goal,scheikl2024movement,hou2024diffusion,dasari2025ingredients}. Notably, pre-trained diffusion models already exhibit strong capabilities. They can generate high-fidelity, photorealistic images \citep{song2019generative,ho2020denoising,song2020denoising,song2020score,nichol2021glide,yang2024mastering}, and in robotics, imitation-trained diffusion policies can produce reasonable action sequences for manipulation \citep{chi2025diffusion,ze20243d,scheikl2024movement}.

Despite the tremendous success, there is a pressing need to adapt pre-trained models to specific downstream tasks. For example, in robotics, diffusion models can serve as powerful and flexible policy classes. When trained on offline expert demonstrations, they can closely mimic expert behavior by generating action sequences consistent with the demonstrations~\citep{chi2025diffusion}, yet still underperform on the downstream task objectives that require action generation beyond imitation \citep{ren2024diffusion,ada2024diffusion}. These downstream task objectives can often be summarized as an abstract scalar-valued reward function $r(x)$ on the generated data, where a high reward value is desired. For instance, in Reinforcement Learning (RL) and robotics, the reward $r$ is often tied to task completion (e.g., reaching a target position). Given the reward function, adapting a pre-trained diffusion model amounts to steering its sample generation toward high rewards. 

Many methods have been developed to adapt a pre-trained diffusion model toward high rewards. An ideal adaptation algorithm should be computationally lightweight, not data-hungry in terms of additional samples or interactions, and admit meaningful performance guarantees. One line of work focuses on training-based methods, including RL-based fine-tuning~\citep{clark2023directly,black2023training,fan2023dpok,prabhudesai2023aligning,uehara2024understanding,uehara2024fine,ren2024diffusion,hu2025towards}, guidance-based methods~\citep{dhariwal2021diffusion} and preference-based fine-tuning such as Direct Preference Optimization (DPO) \citep{wallace2024diffusion,yang2024using,lee2025calibrated}. These methods substantially improve reward-aligned performance, at the cost of additional network training and hyperparameter tuning.

A complementary line of work develops training-free methods, which require no additional training, and can still achieve competitive performance. Some approaches assume access to the gradient of the reward \citep{chung2022diffusion,bansal2023universal,yu2023freedom,ye2024tfg,nguyen2025h}. This assumption, however, often fails in practice---for example, the reward function in molecular design often comes from black-box tools or discrete descriptors, which are non-differentiable \citep{trott2010autodock,abramson2024accurate}.
To address this, several methods rely only on querying reward values, which include Sequential Monte Carlo (SMC)-based methods~\citep{trippe2022diffusion,wu2023practical,dou2024diffusion,cardoso2023monte,phillips2024particle,kim2025test} and search-based methods~\citep{li2024derivative,ma2025inference,li2025dynamic,zhang2025vfscale,jain2025diffusion,zhang2025inference}. While avoiding additional network training, these methods increase inference-time cost and are prone to sample collapse \citep{browne2012survey,uehara2025inference}. Such limitations motivate the following key questions:

\vspace{-0.25em}
\begin{center}
\it Can we design an inference-time training-free efficient adaption algorithm for non-differentiable reward?\\
\it If so, can we prove theoretical guarantees of the algorithm?
\end{center}
\vspace{-0.25em}
We provide positive answers to these two questions and present an inference-time adaptation algorithm \algname{} (\textbf{D}oob-\textbf{O}riented \textbf{I}nference-time \textbf{T}ransformation). Specifically, suppose that a pre-trained diffusion model yields a generative distribution $P_{\theta}$, where $\theta$ denotes the pre-trained parameters in the score network. We formulate the adaptation task as sampling from a conditional distribution $P_\theta(\cdot | \cE)$, where the event $\cE$ encapsulates desired conditions, e.g., the reward of generated samples is beyond a threshold $r_0$. We leverage Doob's $h$-transform \citep{rogers2000diffusions,sarkka2019applied} for the measure transport from $P_\theta$ to $P_{\theta}(\cdot | \cE)$. Importantly, Doob's $h$-transform introduces an additive correction term to the dynamic process of sample generation in diffusion models, allowing efficient implementation by keeping the pre-trained parameter $\theta$ frozen. We summarize our methodological and theoretical contributions as follows.

\noindent $\bullet$ Methodologically, we propose simulation-based approximation to the additive correction term in Doob's $h$-transform. Further, we propose \algname{} algorithm (Algorithm~\ref{alg:DOIT_practical}), which is completely training-free, applies to non-differentiable reward functions, and maintains efficient sampling comparable to pre-trained models.

\noindent $\bullet$ Theoretically, we provide a high-probability convergence guarantee for \algname{}. Our analysis characterizes the error stemming from the approximation of the additive correction term in Lemma~\ref{thm:mc_estimation}, and then we propagate the approximation error to establish an end-to-end total-variation bound between the output distribution of \algname{} and the reward-induced target distribution in Theorem~\ref{thm:tv_decomposition}.
    
\noindent $\bullet$ Empirically, we demonstrate that \algname{} effectively steers the distribution of generated samples toward high-reward regions in Section~\ref{sec:expeiment}. On offline RL benchmarks, \algname~consistently outperforms state-of-the-art baselines while maintaining competitive sampling efficiency.

\section{Related Work}
\label{sec:related_work}
\paragraph{Training-Based Adaptation Methods} 
Reward adaptation is often achieved by updating diffusion model parameters with additional training. One approach casts the denoising process as a Markov decision process and applies RL-based methods to fine-tune pre-trained models for optimizing the reward \citep{clark2023directly,black2023training,fan2023dpok,prabhudesai2023aligning,uehara2024understanding,uehara2024fine,ren2024diffusion,hu2025towards}. Guidance-based methods learn a guidance term dependent on the reward function and distill the guidance into pre-trained model parameters~\citep{ho2022classifier,zhang2023adding,yuan2023reward,zhao2024adding}. More recently, preference-based methods such as DPO adapt diffusion models using pairwise comparisons~\citep{wallace2024diffusion,yang2024using,lee2025calibrated}. Despite strong empirical gains, these methods typically incur nontrivial training cost and may require additional data, interaction, or careful hyperparameter tuning.

\paragraph{Training-Free Adaptation Methods}
Training-free adaptation methods steer a pre-trained diffusion model at inference time.
Several methods utilize the gradient of the reward function to guide the denoising sample generation process toward higher reward \citep{chung2022score,bansal2023universal,he2023manifold,yu2023freedom,ye2024tfg,nguyen2025h}.
For non-differentiable reward functions, a growing set of approaches relies on inference-time scaling. A representative class is SMC-based methods \citep{trippe2022diffusion,wu2023practical,dou2024diffusion,kim2025test,singhal2025general}. They maintain a set of particles, reweight them using reward information, and resample to concentrate on high-reward regions. Another class is search-based adaptation methods \citep{li2024derivative,ma2025inference,li2025dynamic}. Many of them perform local search by generating multiple denoising trajectories at each step and selecting the best one. More recent variants combine tree search with local search methods \citep{zhang2025vfscale,jain2025diffusion,zhang2025inference}.

\paragraph{Doob's $h$-transform Based Adaptation Methods}
There are several recent works leveraging Doob’s $h$-transform to steer diffusion models toward a reward-induced target distribution. 
These works modify the sampling process by a correction term, which is learned by additionally training a neural network \citep{denker2024deft,denker2025iterative,chang2026inference}. A training-free method is developed in \citet{nguyen2025h} for text-to-image editing. Yet, it requires a differentiable reward function. Our method is training-free and applies to generic, non-differentiable rewards. Moreover, we establish a theoretical convergence guarantee of our method, which is highly limited with only very recent advances \citep{guo2026conditional,chang2026inference}.

\paragraph{Notation} For a vector $x$, let $\|x\|_2$ denote its Euclidean norm. Let $\|x\|_1$ and $\|x\|_\infty$ denote its
$\ell_1$-norm and $\ell_\infty$-norm, respectively. For a matrix $A$, let $\|A\|_2$ denote its spectral norm. 
We use $\mathcal{O}(\cdot)$ to hide multiplicative constants in upper bounds. Unless otherwise stated, $\nabla$ denotes the gradient with respect to $x$; when there is no ambiguity, we drop $x$ from the notation.
\section{Diffusion Model and Doob's $h$-transform}
\label{sec:diff_doob}
We briefly review the continuous-time formulation of diffusion models and their induced discrete-time sampling SDE (Section~\ref{sec:dm_basics}), and then introduce Doob's $h$-transform (Section~\ref{sec:doob}) that will be used throughout the paper.

\subsection{Diffusion Model Basics}\label{sec:dm_basics}
A diffusion model aims to learn and sample from an unknown
data distribution $P_{\rm data}$ by estimating the score function~\citep{song2019generative,ho2020denoising,song2020denoising}. It consists of coupled forward and backward processes. The forward process is governed by the SDE:
\begin{align}\label{eq:forward_sde}
\diff Y_t = -\frac{1}{2} Y_t \diff t + \diff W_t \quad t \in [0, T], \quad Y_0 \sim P_{\rm data},
\end{align}
where $T$ is a terminal time, and $W_t$ is a Wiener process. We denote $P_t$ as the marginal distribution of $Y_t$ with density $p_t$. The backward process reverses the evolution in the forward process---referred to as denoising for new sample generation. Formally, the backward process is written as the reverse-time SDE:
\begin{align}\label{eq:backward_sde}
\diff X_t
&= \Big[-\tfrac{1}{2}X_t - \nabla \log p_t(X_t)\Big]\diff t + \diff \overline{W}_t,
\end{align}
where $t \in [0, T], X_T \sim P_T $,
and the SDE evolves backward in time from $T$ to $0$. Here, $\overline{W}_t$ denotes an independent Wiener process, and $\nabla \log p_t$ is the score function. As a result, we have $X_0\sim P_{\mathrm{data}}$.

In practice, the reverse-time SDE~\eqref{eq:backward_sde} is intractable due to the unknown terminal distribution $P_T$ and the unknown score function $\nabla_{x}\log p_t(x)$. Following standard practice, we replace $P_T$ by $\mathcal{N}(0,I)$ and $\nabla \log p_t$ by a trained score network $s_\theta(x,t)$. This leads to the following SDE:
\begin{align}
\diff \tilde{X}_t
= \left[-\tfrac{1}{2}\tilde{X}_t - s_\theta(\tilde{X}_t,t)\right]\diff t
+ \diff \overline{W}_t.
\label{eq:practical_reverse_sde}
\end{align}

To simulate process \eqref{eq:practical_reverse_sde}, we discretize the time interval $[0,T]$ using a grid
\(
0=t_0<t_1<\cdots<t_{L-1}<t_L=T,
\)
where $L$ is the number of discretization steps. Over each interval $t\in[t_{l-1},t_l]$, we consider the piecewise SDE:
\begin{align}
\diff \bar{X}_t
=
\left[-\tfrac{1}{2}\bar{X}_{t} - s_\theta(\bar{X}_{t_l},t_l)\right]\diff t
+\diff \overline{W}_t,
\label{eq:piecewise_sde}
\end{align}
which admits an analytical solution and allows for efficient sampling. Specifically, the transition distribution from $\bar X_{t_l}$ to $\bar X_{t_{l-1}}$ in \eqref{eq:piecewise_sde} is
\begin{align}
\label{eq:diffusion_posterior}
\bar X_{t_{l-1}}| \bar X_{t_l} = x_{t_l} \sim \mathcal{N}\left(\mu_{t_l}(x_{t_l},s_\theta),\sigma^2_{t_l} I \right),
\end{align}
where $\mu_{t_l}$ is a linear function of $s_\theta$ and $\sigma^2_{t_l}$ is a noise schedule. To enable fast sampling, existing literature modifies $\mu_{t_l}$ and $\sigma_{t_l}^2$, such as DDIM~\citep{song2020denoising} and Euler ancestral sampling~\citep{karras2022elucidating}; see Appendix~\ref{app:backward_kernel} for explicit forms. We denote $\bar{X}_0\sim P_{\theta}$ as the generated distribution of \eqref{eq:piecewise_sde}. Furthermore, we denote $\bar{X}_t \sim P_{\theta,t}$ and let $\pathdensity_{\theta,t}$ be its marginal density.

\subsection{Doob's $h$-Transform}\label{sec:doob}

Recall that we view adapting a pre-trained diffusion model as steering sample generation
toward high rewards. However, simply performing reward maximization can lead to reward hacking, where generated samples suffer from severe drops in fidelity and diversity \citep{skalse2022defining}. To mitigate this, it is critical to model the conditional data distribution rather than pursuing pure reward maximization. Formally, we aim to sample from the conditional generated distribution $P_{\theta}(\cdot | \mathcal{E}_{\bar{X}_0})$, where $\mathcal{E}_{\bar{X}_0}$ describes the conditions on samples from the terminal distribution, with $\PP(\mathcal{E}_{\bar{X}_0})>0$. A commonly used choice of $\cE_{\bar{X}_0}$ is to select high-reward samples as
\[
\mathcal{E}_{\bar{X}_0} =\{\bar{X}_0: r({\bar{X}_0})\geq r_0 \},
\]
where $r_0$ is a threshold. Note that $P_{\theta}(\cdot | \cE_{\bar{X}_0})$ is an approximation to the ideal conditional data distribution $P_{\rm data}(\cdot | \mathcal{E}_{X_0})$ with $X_0$ the terminal state of \eqref{eq:backward_sde}, where $\cE_{X_0}$ is defined analogously by replacing $\bar X_0$ with $X_0$.

\paragraph{Transport $P_\theta$ to $P_{\theta}(\cdot | \cE_{\bar{X}_0})$.} Doob's $h$-transform \citep{rogers2000diffusions,sarkka2019applied} provides a principled probabilistic framework to modify the generated distribution $P_{\theta}$ towards the desired conditional distribution $P_{\theta}(\cdot|\mathcal{E}_{\bar{X}_0})$. The key is to define a Doob's $h$-function that dynamically modifies \eqref{eq:piecewise_sde}.
\begin{definition}[Doob's $h$-function]
\label{def:h-function}
The Doob's $h$-function for $0\leq t \leq T$ is defined as
\begin{align*}
    h(x_t,t)& = \PP(\mathcal{E}_{\bar{X}_0} | \bar{X}_t = x_t),
\end{align*}
where $\PP$ is taken with respect to the randomness in \eqref{eq:piecewise_sde} and the randomness in $\mathcal{E}_{\bar{X}_0}$.
\end{definition}
By Bayes' rule, the Doob's $h$-function induces a tilted density at time $t$:
\begin{align}
p^{h}_{\theta,t}(x_t)=p_{\theta,t}(x_t|\mathcal{E}_{\bar{X}_0})
    = h(x_t,t) p_{\theta,t}(x_{t}) / \PP(\mathcal{E}_{\bar{X}_0}).\label{eq:reweight_h}
\end{align}
In particular,
$p_{\theta,0}^h(x)=p_{\theta,0}\left(x|\mathcal{E}_{\bar{X}_0}\right)$, thus realizing the modification from $P_{\theta}$ towards the desired conditional distribution $P_{\theta}(\cdot|\mathcal{E}_{\bar{X}_0})$.

Doob's $h$-transform is a powerful tool for adapting the generated distribution to a generic target distribution via the reweighting induced by the $h$-function. To formalize this capability, we establish the following lemma.
\begin{lemma}
\label{lem:bounded_tilting_via_event}
Let $q$ be the density function of the target distribution such that
\[
\left\|q / p_{\theta,0}\right\|_\infty \le C_q \quad \text{for a constant} \quad C_q < \infty.
\]
Let $U\sim\mathrm{Unif}(0,1)$ be independent of $\bar X_0$. Setting $h(x_t, t) = \PP(\mathcal{E}_{\bar{X}_0} | \bar{X}_t = x_t)$ with
\[
\mathcal{E}_{\bar X_0}=\left\{U\le C_q^{-1} q(\bar X_0) / p_{\theta,0}(\bar X_0)\right\}
\]
leads to $p^h_{\theta,0}(x)\ =q(x).$
\end{lemma}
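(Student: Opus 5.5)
The plan is to use the Bayes-rule identity \eqref{eq:reweight_h} at $t=0$ and compute the two quantities it requires, namely $h(x,0)$ and $\PP(\mathcal{E}_{\bar X_0})$, for the particular event $\mathcal{E}_{\bar X_0}=\{U\le C_q^{-1}q(\bar X_0)/p_{\theta,0}(\bar X_0)\}$. This is the rejection-sampling construction. Draw $\bar X_0\sim P_\theta$ and accept it with probability $C_q^{-1}q/p_{\theta,0}$. The accepted samples should then be distributed according to $q$, and the $h$-transform at time $0$ is exactly conditioning on acceptance.

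First I compute $h(x,0)=\PP(\mathcal{E}_{\bar X_0}\mid \bar X_0=x)$. Since $U$ is independent of $\bar X_0$ and uniform on $(0,1)$, we have $\PP(U\le a)=a$ for all $a\in[0,1]$. The bound $\|q/p_{\theta,0}\|_\infty\le C_q$ guarantees that $a(x):=C_q^{-1}q(x)/p_{\theta,0}(x)$ lies in $[0,1]$, and it is nonnegative because $q\ge0$. Hence
\[
h(x,0)=C_q^{-1}\,\frac{q(x)}{p_{\theta,0}(x)}
\]
on $\{p_{\theta,0}>0\}$. This is where the boundedness hypothesis is used: without it the probability would truncate at $1$ and the formula would fail.

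Next I compute the normalizer:
\[
\PP(\mathcal{E}_{\bar X_0})=\int h(x,0)\,p_{\theta,0}(x)\,\diff x=C_q^{-1}\int q(x)\,\diff x=C_q^{-1}.
\]
This step uses that $q$ vanishes where $p_{\theta,0}=0$, which is implied by the finiteness of the sup-norm ratio (i.e.\ $q\ll p_{\theta,0}$), together with $\int q=1$. In particular $\PP(\mathcal{E}_{\bar X_0})>0$, as the setup requires.

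Substituting both into \eqref{eq:reweight_h} at $t=0$ gives
\[
p^h_{\theta,0}(x)=\frac{C_q^{-1}\,q(x)/p_{\theta,0}(x)\cdot p_{\theta,0}(x)}{C_q^{-1}}=q(x).
\]
This holds on the support of $p_{\theta,0}$. Off that support both sides are zero, by absolute continuity.

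The only delicate point is the measure-theoretic bookkeeping. Conditioning on $\bar X_0=x$ should be read as a regular conditional probability, defined $p_{\theta,0}$-a.e., and the conclusion holds as an equality of densities almost everywhere. I expect no real obstacle beyond stating these conventions.
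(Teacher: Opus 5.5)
Your proposal is correct and follows the same route as the paper's proof. Both compute $h(x,0)=q(x)/(C_q\,p_{\theta,0}(x))$ from the independence of $U$ and the bound $\|q/p_{\theta,0}\|_\infty\le C_q$, obtain $\PP(\mathcal{E}_{\bar X_0})=1/C_q$, and substitute into \eqref{eq:reweight_h} at $t=0$. Your remarks on absolute continuity and almost-everywhere equality are a sensible tightening that the paper leaves implicit.
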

The proof of Lemma~\ref{lem:bounded_tilting_via_event} is deferred to Appendix~\ref{append:proof_lem_bounded_tilting_via_event}.

\paragraph{Tilted Sampling Process for $P_\theta(\cdot | \cE_{\bar{X}_0})$.} A vital advantage of Doob's $h$-transform is that sampling from the tilted distribution in \eqref{eq:reweight_h} reduces to adding a time-dependent correction term to \eqref{eq:piecewise_sde}. Precisely, in order to generate samples from $P_\theta(\cdot | \cE_{\bar{X}_0})$, we simulate the following piecewise SDE that evolves backward in time from $T$ to $0$ \citep{rogers2000diffusions,sarkka2019applied,nguyen2025h},
\begin{align}
\diff \bar{X}_t^h
\!=\!
\left[-\tfrac{1}{2}\bar{X}_{t}^h - s_{\theta}(\bar{X}_{t_l}^h,t_l)-\nabla \log h(\bar{X}_{t}^h,t)\right]\diff t +\diff \overline{W}_t,
\label{eq:piecewise_sde_corrected}
\end{align}
where $t\in [t_{l-1},t_l]$. We refer to $\{\bar X_t^h\}_{t\in[0,T]}$ as the tilted sampling process.  
We denote $\bar X_t^h\sim P_{\theta,t}^h$ as its time-$t$ marginal distribution, and $p_{\theta,t}^h$ as its marginal density. We use $\PP_\theta^h$ to denote the corresponding path measure. 

As can be seen, Doob's $h$-transform modifies the original score network $s_{\theta}$ by adding a dynamic Doob’s correction term $\nabla \log h$. 
In practice, we apply a piecewise constant approximation to $\nabla \log h$, replacing $\nabla \log h(\bar X_t^h,t)$ by $\nabla \log h(\bar X_{t_l}^h,t_l)$ for $t\in[t_{l-1},t_l]$. Consequently, we simulate the following piecewise SDE:
\begin{align}
\diff \hat{X}_t^h
\!=\!
\left[\!-\tfrac{1}{2}\hat{X}_{t}^h - s_{\theta}(\hat{X}_{t_l}^h,t_l)-\nabla \log h(\hat{X}_{t_l}^h,t_l)\right]\!\diff t +\diff \overline{W}_t,
\label{eq:piecewise_sde_corrected_piecewise}
\end{align}
where $t\in [t_{l-1},t_l]$.

However, as exactly evaluating $\nabla \log h(x,{t_l})$ is intractable, we must construct a reliable approximation to simulate the process $\{\hat{X}_t^h\}_{t\in[0,T]}$.
\section{Adaptation Algorithm Based on Doob's $h$-Transform}\label{sec:doit}
This section presents the \algname~({\bf D}oob-{\bf O}riented {\bf I}nference-time {\bf T}ransformation) algorithm, solving the major challenge of the intractability of exactly computing $\nabla \log h$. We develop a simulation-based approximation to $\nabla \log h$ and summarize our algorithm in Algorithm~\ref{alg:doit_pseudocode}.

To derive a practical approximation of $\nabla \log h$, we write $\nabla\log h=\nabla h/h$, and then we approximate the numerator $\nabla h$ and the denominator $h$ separately. By Definition~\ref{def:h-function}, the $h$-function can be straightforwardly approximated by Monte Carlo (MC) samples. Therefore, we focus on the more intricate term $\nabla h$. The following lemma expresses $\nabla h$ as a single expectation that is suitable for approximation.

\begin{lemma}\label{lem:score_h_gradient}
Fix a discretization index $l\in\{1,\dots,L\}$ and any $x$. Assume $s_{\theta}(x,t)$ is continuously differentiable with respect to $x$. Then it holds that
\begin{align*}
\nabla h(x,t_l)
\!=\!
\EE\!\left[
h(\bar X_0,0)
\nabla_{\bar X_{t_l}}\!\log \phi_{\theta}(\bar X_{t_{l-1}}\!| \bar X_{t_l})
\Big|\bar X_{t_l}=x
\right]\!,
\end{align*}
where $\phi_{\theta}$ is the Gaussian density defined in \eqref{eq:diffusion_posterior}, and the expectation is taken over the conditional distribution of the backward trajectory
$(\bar X_{t_{l-1}},\dots,\bar X_0)| \bar X_{t_l}=x$.
\end{lemma}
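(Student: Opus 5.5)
The plan is to combine the Markov property of the discretized backward chain with the score-function (likelihood-ratio) identity applied to the one-step Gaussian kernel $\phi_\theta$ from \eqref{eq:diffusion_posterior}. The chain $\bar X_{t_L}\to\bar X_{t_{L-1}}\to\cdots\to\bar X_0$ generated by \eqref{eq:piecewise_sde} is Markov. Because the event $\mathcal{E}_{\bar X_0}$ depends only on $\bar X_0$ (and on auxiliary randomness independent of the chain, such as $U$ in Lemma~\ref{lem:bounded_tilting_via_event}), the tower property gives, for every $l$,
\begin{align*}
h(x,t_l)=\EE\big[h(\bar X_{t_{l-1}},t_{l-1})\,\big|\,\bar X_{t_l}=x\big]
=\int h(y,t_{l-1})\,\phi_\theta(y\mid x)\,\diff y .
\end{align*}
Iterating this down to time $0$, or applying the tower property directly, we also have $h(\bar X_{t_{l-1}},t_{l-1})=\EE[h(\bar X_0,0)\mid \bar X_{t_{l-1}}]$.

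The first step is to differentiate under the integral with respect to $x$. Since $h(\cdot,t_{l-1})\in[0,1]$, we get
\begin{align*}
\nabla h(x,t_l)=\int h(y,t_{l-1})\,\nabla_x\phi_\theta(y\mid x)\,\diff y
=\int h(y,t_{l-1})\,\nabla_x\log\phi_\theta(y\mid x)\,\phi_\theta(y\mid x)\,\diff y .
\end{align*}
The right-hand side equals $\EE[h(\bar X_{t_{l-1}},t_{l-1})\nabla_{\bar X_{t_l}}\log\phi_\theta(\bar X_{t_{l-1}}\mid\bar X_{t_l})\mid \bar X_{t_l}=x]$.

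The second step substitutes $h(\bar X_{t_{l-1}},t_{l-1})=\EE[h(\bar X_0,0)\mid\bar X_{t_{l-1}},\bar X_{t_l}]$. This is valid by the Markov property, since conditioning additionally on $\bar X_{t_l}$ does not change the conditional law of $\bar X_0$ given $\bar X_{t_{l-1}}$. The weight $\nabla\log\phi_\theta$ is measurable with respect to $(\bar X_{t_{l-1}},\bar X_{t_l})$, so it can be pulled inside the conditional expectation. The tower property then yields the stated expectation over the full trajectory $(\bar X_{t_{l-1}},\dots,\bar X_0)$ given $\bar X_{t_l}=x$.

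The main obstacle is justifying the interchange of gradient and integral, which requires a dominated-convergence bound locally uniform in $x$. Writing $\phi_\theta(y\mid x)\propto\exp(-\|y-\mu_{t_l}(x,s_\theta)\|_2^2/(2\sigma_{t_l}^2))$, we have
\begin{align*}
\nabla_x\log\phi_\theta(y\mid x)=\sigma_{t_l}^{-2}\,\big(\nabla_x\mu_{t_l}(x,s_\theta)\big)^{\top}\big(y-\mu_{t_l}(x,s_\theta)\big).
\end{align*}
Here $\mu_{t_l}$ is affine in $x$ and in $s_\theta(x,t_l)$, so it is $C^1$ by the assumption on $s_\theta$. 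On a compact neighbourhood $K$ of $x$, the Jacobian $\nabla_x\mu_{t_l}$ is bounded and $\mu_{t_l}(K)$ is bounded, by continuity. Hence $\sup_{x'\in K}\|\nabla_{x'}\phi_\theta(y\mid x')\|$ is bounded by a polynomial in $\|y\|$ times a Gaussian tail, which is integrable. Combined with $0\le h\le 1$, dominated convergence (via the mean value theorem) justifies differentiation under the integral and also shows that $\nabla h(\cdot,t_l)$ exists. The same bound gives integrability of $h(\bar X_0,0)\nabla\log\phi_\theta$, which the conditional-expectation manipulations in the second step require.
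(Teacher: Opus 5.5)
Your proposal is correct and follows essentially the paper's route. Both arguments integrate out the downstream trajectory so that only the first Gaussian kernel $\phi_\theta(\cdot\mid x)$ depends on $x$; the paper's bound $\int h(x_0,0)\,p_{\theta,0|t_{l-1}}(x_0\mid x_{t_{l-1}})\,\diff x_0\le 1$ is exactly your $0\le h(\cdot,t_{l-1})\le 1$. Both then differentiate under the integral, write $\nabla\phi_\theta=\phi_\theta\nabla\log\phi_\theta$, and reassemble the trajectory expectation. Your domination bound, locally uniform on a compact neighbourhood of $x$, is a more careful version of the paper's one-line Leibniz-rule justification.
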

The proof of Lemma~\ref{lem:score_h_gradient} is deferred to Appendix~\ref{app:proof_lem_score_h_gradient}.

Lemma~\ref{lem:score_h_gradient} suggests using a sample average to approximate the expectation and further approximate $\nabla h$. More importantly, since the gradient is only taken over the Gaussian transition density $\phi_\theta$, approximating $\nabla h$ does not require differentiability in the reward function. 


\paragraph{Sample Average Approximation to $\nabla h$ and $h$.} At time $t_l$, given a state $x_{t_l}$, we simulate $M$ trajectories $\{x_{t_{l-1}}^{(m)}, \dots, x_0^{(m)}\}_{m=1}^M$ using the transition kernel in \eqref{eq:diffusion_posterior}. Then $\nabla h$ and $h$ are approximated respectively by
\begin{align*}
\nabla \hat{h}(x_{t_l},t_l)&\textstyle =\!\frac{1}{M}\sum_{m=1}^{M}h(x_0^{(m)},0)\nabla_{x_{t_l}}\log \phi_{\theta}(x_{t_{l-1}}^{(m)}| x_{t_l}),\\
\hat h(x_{t_l},t_l)&\textstyle    =\!\frac{1}{M}\sum_{m=1}^{M} h(x_0^{(m)},0).
\end{align*}
We approximate $\nabla \log h$ via a plug-in ratio approximation,
\begin{align}
    \nabla \log \hat{h}(x_{t_l},t_l)=\frac{\nabla \hat{h}(x_{t_l},t_l)}{\hat h(x_{t_l},t_l)\vee \eta_{t_l}}.\label{eq:mc_estimator}
\end{align}
Here, we introduce a truncation level $\eta_{t_l}>0$ for numerical stability. When $h(x_{t_l}, t_l)$ is small, its approximation $\hat{h}$ will be close to zero. Na\"{i}vely using $\hat{h}$ causes $\nabla \log \hat{h}$ to have an exploding magnitude. We remark that such a stability issue is intrinsic to the quality of the pre-trained model, rather than a limitation of our proposed method. A small $h(x_{t_l}, t_l)$ indicates that in the pre-trained generative distribution, the desired condition is hardly satisfied by generated samples, incurring a significant gap between $P_\theta$ and $P_{\theta}(\cdot | \cE_{\bar{X}_0})$.

We remark that our derived approximation of $\nabla\log h$ is different from existing approaches \citep{bansal2023universal,nguyen2025h}. They often pass through the expectation to approximate
\begin{align*}
     \EE[h(\bar X_0,0)|\bar X_{t_l}=x_{t_l}] \approx  h(\EE[\bar X_0|\bar X_{t_l}=x_{t_l}],0).
\end{align*}
When taking derivatives with respect to $x_{t_{l}}$, such an approximation inevitably requires the differentiability of $h(\cdot, 0)$---consequently the differentiability of the reward function. In contrast, our approximation unrolls the transition from $\bar{X}_{t_l}$ to $\bar{X}_0$ into a series of incremental Gaussian transitions, i.e., $\phi_\theta$. This allows us to pass through the differentiation directly to the transition probability instead of requiring differentiability of the reward function.

Given $\nabla \log \hat{h}$, we simulate samples using \eqref{eq:piecewise_sde_corrected_piecewise}, achieving the adaptation from $P_\theta$ to $P_{\theta}(\cdot | \cE_{\bar{X}_0})$. A prototypical algorithm is summarized below.

\begin{algorithm}[h]
\caption{\algname{}: Prototypical version}
\label{alg:doit_pseudocode}
\begin{algorithmic}[1]
\REQUIRE Pre-trained score $s_\theta(x,t)$; the number of MC samples  $M$; truncation level $\{\eta_{t_l}\}_{l=1}^L$.

\STATE Sample $\hat{X}_{t_L}^h\sim \mathcal N(0,I)$. 
\FOR{$l=L,L-1,\dots,1$}
    \STATE Compute $\nabla \log \hat h(\hat{X}_{t_l}^h,t_l)$ in \eqref{eq:mc_estimator} using $M$ backward rollouts
    initiated at $\hat{X}_{t_l}^h$ by simulating~\eqref{eq:piecewise_sde_corrected_piecewise}.
    \smallskip
    \STATE $\nabla \log \hat{p}_{\theta,t_l}^h(\hat{X}_{t_l}^h) \leftarrow s_\theta(\hat{X}_{t_l}^h,t_l)+\,\nabla \log \hat h(\hat{X}_{t_l}^h,t_l)$.
    \smallskip
    \STATE Sample $\hat{X}_{t_{l-1}}^h\sim \mathcal N\!\big(\mu_{t_l}(\hat{X}_{t_l}^h,\nabla \log \hat{p}_{\theta,t_l}^h),\,\sigma_{t_l}^2 I\big)$.
\ENDFOR
\STATE \textbf{output} $\hat{X}_0^h\sim \hat{P}_{\theta}^h$.
\end{algorithmic}
\end{algorithm}

Here $\hat{P}^h_{\theta}$ denotes the distribution of the generated sample $\hat X_0^h$. Each iteration in Algorithm~\ref{alg:doit_pseudocode} relies on simulating $M$ independent backward sampling trajectories using \eqref{eq:piecewise_sde}, which is computationally expensive. To mitigate the computational overhead, we significantly reduce the simulation cost by predicting clean data directly using the score network, while skipping all intermediate states in sampling trajectories. Algorithm~\ref{alg:DOIT_practical} in Section~\ref{sec:practical_ver} summarizes such computational modifications and demonstrates strong performance over state-of-the-art baselines in offline RL benchmarks.

\begin{figure}[H]
    \centering
    \includegraphics[width=0.65\linewidth]{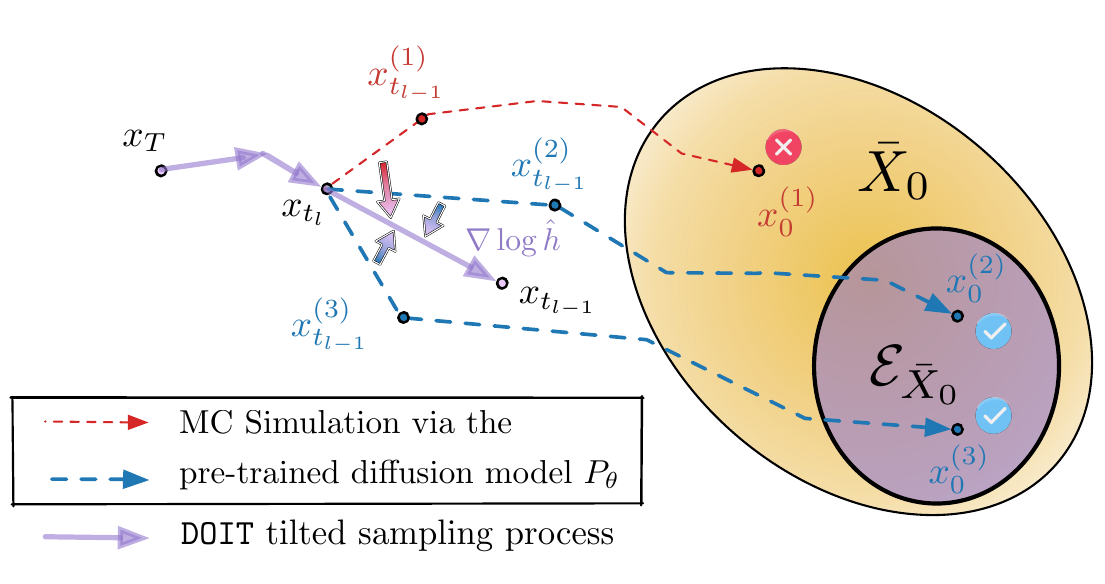}
    \caption{\algname: At each $t_l$, we simulate $M$ trajectories (here, $M=3$) starting from $x_{t_l}$ to approximate $\nabla\log h(x_{t_l},t_l)$ via~\eqref{eq:mc_estimator}, then utilize it to modify the sampling dynamics.}
    \label{fig:doit-demo}
\end{figure}

\section{Convergence Guarantee of \algname}
In this section, we provide a convergence guarantee of $\hat{P}^h_\theta$ generated by \algname{} relative to the ideal conditional data distribution $P_{\mathrm{data}}(\cdot| \cE_{X_0})$. Our analysis proceeds in two steps. We first quantify the approximation error of $\nabla\log h(x_{t_l},t_l)$. We then propagate this approximation error through the tilted sampling dynamics in \eqref{eq:piecewise_sde_corrected} to obtain an end-to-end distribution approximation bound.

To begin with, we impose the following assumption on the pre-trained generated distribution $P_{\theta}$ and score $s_{\theta}(x_{t_l},t_l)$.
\begin{assumption}\label{assump:bound_h_0}
There exist constants $\rho\in(0,1]$ and $G>0$ such that $\mathbb{P}(\mathcal{E}_{\bar{X}_0})
\ge\rho,$
and for any index $l\in\{1,\dots,L\}$, it holds that $\sup_{x}\left\|\nabla_x s_{\theta}(x,t_l)\right\|_2 \le G$.
\end{assumption}
The first condition is a non-degeneracy assumption, it requires the target event $\cE_{\bar X_0}$ to occur with at least probability $\rho$ under $P_\theta$. This lower bound is essential for the stability of the MC approximation in \eqref{eq:mc_estimator}. When $\rho$ is extremely small, accurately approximating $\nabla \log h$ from a finite number of rollouts is fundamentally difficult, as $h(x_0^{(m)},0)$ is not close to zero for only a small fraction of trajectories.

The second condition imposes a global Lipschitz-type regularity on the pre-trained score network, which is a key ingredient for establishing concentration of the MC approximation. Such smoothness assumptions are reasonable in practice, since modern training typically incorporates regularization and stabilization techniques that implicitly control the network’s Lipschitz
continuity, such as weight decay \citep{krogh1991simple,loshchilov2017decoupled}.

The following lemma establishes a high probability bound on approximating $\nabla \log h$.
\begin{lemma}[Approximation bound of $\nabla \log h$]\label{thm:mc_estimation}
Suppose Assumption~\ref{assump:bound_h_0} holds. Fix a discretization index $l\in\{1,\dots,L\}$ and $\delta\in(0,1)$. Then for a sufficiently large number of MC samples $M$ and $\eta_{t_l}=M^{-1/6}$, with probability at least $1-\delta $, it holds that
\begin{align*}
    \EE_{X_{t_l}\sim P^h_{\theta,t_l}}\left[ \norm{\nabla \log \hat{h}(X_{t_l},t_l)-\nabla \log h(X_{t_l},t_l)}_2^2 \right]=~\cO\left(\frac{1}{\sigma_{t_l}^2}\frac{ \log \frac{M}{\delta}\sqrt{\log(1/\delta)}}{M^{1/6}}\right)&.
\end{align*}
\end{lemma}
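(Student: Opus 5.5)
Write $h=h(x,t_l)$, $\hat h=\hat h(x,t_l)$, $g=\nabla h(x,t_l)$, $\hat g=\nabla\hat h(x,t_l)$, and $\eta=\eta_{t_l}$. The plan is to prove a pointwise bound on $\|\hat g/(\hat h\vee\eta)-g/h\|_2$ at a fixed $x$. We then integrate it against the tilted marginal $P^h_{\theta,t_l}$, using the fact that this measure down-weights exactly the points where $h$ is small. Two structural facts are used throughout. The first is $0\le h(x_0,0)\le 1$. The second is that, since $\phi_\theta$ is the Gaussian of \eqref{eq:diffusion_posterior} with mean $\mu_{t_l}(x,s_\theta)$ linear in $x$ and $s_\theta(x,t_l)$, we have
\[
\nabla_x\log\phi_\theta(X_{t_{l-1}}|x)=J_\mu(x)^\top Z/\sigma_{t_l},\qquad Z\sim\mathcal N(0,I).
\]
Here $\|J_\mu\|_2\le c(1+G)$ by Assumption~\ref{assump:bound_h_0}. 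Each summand of $\hat g$ is therefore a bounded weight times a sub-Gaussian vector with parameter $\cO(1/\sigma_{t_l})$, and its mean is $g$ by Lemma~\ref{lem:score_h_gradient}.

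\textbf{Step 1 (concentration at fixed $x$).} Hoeffding's inequality gives $|\hat h-h|\le\sqrt{\log(4/\delta')/(2M)}$. For $\hat g$, I would use a vector Bernstein or sub-Gaussian bound for norms of sums. Truncating $\|Z\|$ at level $\sqrt{d}+\sqrt{2\log(M/\delta')}$ via a union bound produces the $\log(M/\delta)$ factor. The result is $\|\hat g-g\|_2\lesssim\sigma_{t_l}^{-1}\sqrt{\log(M/\delta')\log(1/\delta')/M}$. In the same way, $\|g\|_2\le\EE[h(\bar X_0,0)\,\|J^\top Z\|]/\sigma_{t_l}\lesssim h\sqrt{\log(1/h)}/\sigma_{t_l}$, where the last bound uses that the Gaussian norm is weighted by a $[0,1]$ quantity with mean $h$. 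Consequently $\|g/h\|\lesssim\sqrt{\log(1/h)}/\sigma_{t_l}$.

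\textbf{Step 2 (split on the size of $h$).} Let $\epsilon_M\asymp\sqrt{\log(1/\delta)/M}$.

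On the good set $\{h\ge 2\eta\}$, with $\eta=M^{-1/6}\gg\epsilon_M$, we have $\hat h\ge\eta$, so the truncation is inactive. The standard ratio decomposition
\[
\frac{\hat g}{\hat h}-\frac{g}{h}=\frac{\hat g-g}{\hat h}+\frac{g}{h}\cdot\frac{h-\hat h}{\hat h}
\]
then gives an error of order $\sigma_{t_l}^{-1}\eta^{-1}\sqrt{\log(M/\delta)\log(1/\delta)/M}$, which squares to roughly $\sigma_{t_l}^{-2}M^{-2/3}$ times the logarithmic factors.

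On the bad set $\{h<2\eta\}$, we only use the crude bound
\[
\|\nabla\log\hat h\|+\|\nabla\log h\|\lesssim\frac{\|\hat g\|}{\eta}+\frac{\sqrt{\log(1/h)}}{\sigma_{t_l}}.
\]
Here $\|\hat g\|\le\|g\|+{}$(Step~1 deviation), which is $\lesssim\sigma_{t_l}^{-1}(\eta\sqrt{\log(1/\eta)}+\epsilon_M\sqrt{\log M})$. Squared, this is $\cO(\sigma_{t_l}^{-2}\log(M/\delta))$. The bad set carries mass $P^h_{\theta,t_l}(h<2\eta)=\EE_{P_{\theta,t_l}}[h\,\mathbb 1\{h<2\eta\}]/\PP(\mathcal E)\le 2\eta/\rho$ by \eqref{eq:reweight_h}. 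So the bad region contributes $\cO(\sigma_{t_l}^{-2}\log(M/\delta)\,M^{-1/6}/\rho)$. This dominates the good-region term and produces the stated $M^{-1/6}$ rate. The requirement $\eta\ge 2\epsilon_M$ is where "$M$ sufficiently large" enters.

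\textbf{Step 3 (from pointwise to the expectation; main obstacle).} The statement is a high-probability bound on an integral over $x$. The MC rollouts are in effect drawn afresh for each $x$, so the fixed-$x$ events from Step~1 are not uniform in $x$. My first attempt is to read the randomness as a single coupled draw: use common random numbers $(Z^{(m)})$ for all $x$, so that $x\mapsto\hat h,\hat g$ are random fields. A uniform-in-$x$ bound then seems hard, since $h(\cdot,0)$ is non-smooth. The fallback is to bound the conditional expectation over the MC randomness of the squared error at each $x$, using tail integration of the Step~1 bounds, which leaves the same rates. By Fubini, the $P^h_{\theta,t_l}$-integrated error then has expectation of the stated order. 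Markov's inequality, together with the $\log(1/\delta)$ already carried in the sub-Gaussian tails, upgrades this to probability $1-\delta$, at the cost of adjusting the log factors to $\log(M/\delta)\sqrt{\log(1/\delta)}$. Controlling this conversion without losing a full $1/\delta$ factor is the delicate point. If it fails, I would settle for an expectation bound plus a median-of-means style argument.
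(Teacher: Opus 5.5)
\textbf{The gap is in Step~3}, which is the part that makes the statement a high-probability bound. Your fallback controls only $\EE_{z_{\mathrm{mc}}}$ of the $P^h_{\theta,t_l}$-integrated squared error, and Markov's inequality then costs a factor $1/\delta$, not $\log(M/\delta)\sqrt{\log(1/\delta)}$. The fixed-$x$ sub-Gaussian tails of Step~1 do not transfer to the integral: after Fubini, nothing in your argument controls how the integral fluctuates with the MC noise. Median-of-means would change the estimator, so it would not prove the lemma for $\nabla\log\hat h$ as defined.

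The missing ingredient is the coupled view you set aside, combined with McDiarmid's bounded-differences inequality applied directly to the integrated functional. This is what the paper does. No uniform-in-$x$ control is needed, and the non-smoothness of $h(\cdot,0)$ is irrelevant. Replacing one of the $M$ independent noise blocks has two effects:
\begin{itemize}
\item It changes $\hat h(x,t_l)$ by at most $1/M$ for every $x$, because $h(\cdot,0)\in[0,1]$.
\item On the product event $\Omega_R=\{\max_m\|z^{(m)}_l\|_2\le R\}$, with $R^2\asymp d+\log(M/\delta)$, it changes $\nabla\hat h(x,t_l)$ by at most $2C_GR/(\sigma_{t_l}M)$ for every $x$.
\end{itemize}
These deterministic constants survive integration in $x$. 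Conditioning on $\Omega_R$ keeps the blocks independent, $\PP(\Omega_R^c)\le\delta/2$, and $\EE[\,\cdot\,|\Omega_R]\le 2\EE[\,\cdot\,]$ handles the recentering.

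The paper makes this clean by first using $1/(\hat h\vee\eta_{t_l})\le 1/\eta_{t_l}$. This bounds the error by a deterministic truncation bias $\cH_3=\cO\big(\eta_{t_l}\log(1/\eta_{t_l})/(\rho\sigma_{t_l}^2)\big)$, supported on $\{h<\eta_{t_l}\}$, plus two terms $\cH_1,\cH_2$ that are quadratic in $\nabla\hat h-\nabla h$ and $\hat h-h$. Their means are $\cO(\eta_{t_l}^{-2}\sigma_{t_l}^{-2}M^{-1})$, and their McDiarmid deviations are $\cO\big(R^2\eta_{t_l}^{-2}\sigma_{t_l}^{-2}\sqrt{\log(1/\delta)/M}\big)$. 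That deviation term, not the bias, is the source of the factor $\log(M/\delta)\sqrt{\log(1/\delta)}$. It is also of the same order $M^{-1/6}$ as the bias at $\eta_{t_l}=M^{-1/6}$, since $\eta^{-2}M^{-1/2}$ balances $\eta\log(1/\eta)$ there.

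\textbf{Two smaller slips in Steps~1--2 also need repair.}
\begin{itemize}
\item On $\{h<2\eta\}$, the term $\|\nabla\log h\|_2^2\lesssim\log(1/h)/\sigma_{t_l}^2$ is unbounded, so ``mass times sup'' does not give $\cO(\sigma_{t_l}^{-2}\log(M/\delta))$. You must instead integrate $h\log(1/h)$ over $\{h<2\eta\}$ against $P_{\theta,t_l}$ using the tilt. This gives $\lesssim\eta\log(1/\eta)/\rho$, which is exactly the paper's $\cH_3$ computation.
\item Bounding $\|\nabla h\|_2$ through $\EE[h(\bar X_0,0)\|J_\mu^\top Z\|_2]$ loses a factor $\sqrt d$. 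This is harmless for the stated $\cO(\cdot)$, which treats $d$ as a constant. A dimension-free bound $\|\nabla\log h\|_2\lesssim\sqrt{\log(1/h)}/\sigma_{t_l}$ requires projecting onto a unit direction first, or the transport argument of Lemma~\ref{lem:log_h_nabla_h}.
\end{itemize}
With these fixes, your Steps~1--2 reproduce the expectation-level content of the paper's $\cH_3$ and $\EE_{z_{\mathrm{mc}}}[\cH_1+\cH_2]$ bounds. The high-probability statement still needs the bounded-differences argument above.
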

The proof of Lemma~\ref{thm:mc_estimation} is deferred to Appendix~\ref{app:proof_mc_estimation}. 

Lemma~\ref{thm:mc_estimation} gives a high-probability bound on the mean-squared error of the MC approximation for $\nabla\log h(\cdot,t_l)$. The error decays as the number of MC samples $M$ increases,  with a dimension-independent rate $\cO(M^{-1/6})$, reflecting the concentration of the MC approximation \eqref{eq:mc_estimator}. 

Moreover, the bound depends on $\sigma_{t_l}^2$, indicating that approximating $\nabla \log h$ becomes more challenging when $\sigma_{t_l}$ is small and $t_l$ is near zero. In this regime, the conditional transition $X_{t_{l-1}}| X_{t_l}$ is nearly deterministic, meaning the transition density $\phi_\theta(\cdot | X_{t_l})$ approaches a Dirac delta function. This results in an ill-behaved score function and causes the magnitude of $\nabla\log h(\cdot,t_l)$ to become extremely large. Consequently, even small approximation errors in the numerator are significantly amplified. The remaining factor $\log(M/\delta)\sqrt{\log(1/\delta)}$ comes from concentration over the MC randomness under a $1-\delta$ guarantee.

We propagate the MC error bound in Lemma~\ref{thm:mc_estimation} through the tilted sampling process, which yields an end-to-end convergence guarantee by bounding the discrepancy between the target conditional distribution $P_{\mathrm{data}}(\cdot| \cE_{X_0})$ and the output distribution $\hat P_{\theta}^h$ produced by \algname{}. We begin by introducing an assumption that quantifies the discretization error incurred by \eqref{eq:piecewise_sde_corrected_piecewise}.
\begin{assumption}\label{assump:dis_error}
For any $l\in \{1,\dots,L\}$, and for any $t\in [t_{l-1},t_l]$,
we assume that the discretization error is uniformly bounded by $\varepsilon_{\mathrm{dis}}\geq 0$, 
    \begin{align*}
        \EE_{\,\PP_{\theta}^h}\left[\|\nabla \log h(\bar{X}_{t_l}^h,t_l)-\nabla \log h(\bar{X}_{t}^h,t)\|_2^2\right]\leq \varepsilon_{\mathrm{dis}}.
    \end{align*}
\end{assumption}
Under additional regularity conditions (e.g., Lipschitz continuity of $\nabla\log h$ in $(x,t)$ together with mild properties of the event $\cE_{\bar X_0}$), such a bound can be derived explicitly; see \citet{chen2022sampling} for reference.
\begin{theorem}
\label{thm:tv_decomposition}
Suppose Assumptions~\ref{assump:bound_h_0} and~\ref{assump:dis_error} hold and choose $\eta_t$ as in Lemma~\ref{thm:mc_estimation}. With probability at least $1-\delta$, it holds that
\begin{align*}
& \quad \;\mathrm{TV} \left(P_{\mathrm{data}}(\cdot|\cE_{X_0}), \hat{P}_{\theta}^h\right)\lesssim \frac{1}{\rho} \mathrm{TV}(P_{\mathrm{data}}, P_{\theta})+\sqrt{\frac{\kappa_{\sigma} \log \frac{M}{\delta}\sqrt{\log(1/\delta)}}{M^{1/6}}+\varepsilon_{\mathrm{dis}}T},
\end{align*}
where $\kappa_{\sigma}=\frac{T}{L}\sum_{l=1}^L  \sigma_{t_l}^{-2}$.
\end{theorem}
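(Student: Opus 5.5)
The plan is to split the error with the triangle inequality through the intermediate distribution $P_\theta(\cdot|\cE_{\bar X_0}) = P^h_{\theta,0}$:
\[
\mathrm{TV}\bigl(P_{\mathrm{data}}(\cdot|\cE_{X_0}),\hat P^h_\theta\bigr)\le \mathrm{TV}\bigl(P_{\mathrm{data}}(\cdot|\cE_{X_0}),P_\theta(\cdot|\cE_{\bar X_0})\bigr)+\mathrm{TV}\bigl(P^h_{\theta,0},\hat P^h_\theta\bigr).
\]
The first term measures how the pre-training error is inflated by conditioning. The second term measures how well \algname{} simulates the exact tilted process \eqref{eq:piecewise_sde_corrected}. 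Both processes start from $\mathcal N(0,I)$, so no separate initialization term should appear.

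\textbf{Step 1: conditioning term.} Write $P(A)=\PP(\cE_{X_0})$ and $Q(A)=\PP(\cE_{\bar X_0})\ge\rho$, using Assumption~\ref{assump:bound_h_0}. Here the event is determined by $x_0$ (possibly together with independent auxiliary randomness, as in Lemma~\ref{lem:bounded_tilting_via_event}), through an acceptance function $g(x)\in[0,1]$. For any test set $B$,
\[
\Bigl|\tfrac{P(gB)}{P(g)}-\tfrac{Q(gB)}{Q(g)}\Bigr|\le \tfrac{|P(gB)-Q(gB)|}{Q(g)}+P(gB)\Bigl|\tfrac{1}{P(g)}-\tfrac1{Q(g)}\Bigr|.
\]
Since $P(gB)\le P(g)$, both pieces are at most $\mathrm{TV}(P_{\mathrm{data}},P_\theta)/\rho$. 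This gives $2\rho^{-1}\mathrm{TV}(P_{\mathrm{data}},P_\theta)$.

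\textbf{Step 2: sampling term via Girsanov.} By data processing, $\mathrm{TV}(P^h_{\theta,0},\hat P^h_\theta)\le \mathrm{TV}(\PP^h_\theta,\hat\PP^h_\theta)$ on path space. Pinsker then bounds this by $\sqrt{\mathrm{KL}(\PP^h_\theta\|\hat\PP^h_\theta)/2}$. The two SDEs share the diffusion coefficient and the $s_\theta(\cdot,t_l)$ drift, and differ only in $\nabla\log h(\bar X^h_t,t)$ versus $\nabla\log\hat h(\bar X^h_{t_l},t_l)$. Girsanov (with the usual Novikov-free approximation argument as in \citet{chen2022sampling}) therefore gives
\[
\mathrm{KL}\le \tfrac12\sum_{l=1}^L\int_{t_{l-1}}^{t_l}\EE_{\PP^h_\theta}\bigl\|\nabla\log\hat h(\bar X^h_{t_l},t_l)-\nabla\log h(\bar X^h_t,t)\bigr\|_2^2\,\diff t.
\]
Split the integrand with $\|a+b\|^2\le 2\|a\|^2+2\|b\|^2$ into two pieces.
\begin{itemize}
\item The discretization piece is at most $\varepsilon_{\mathrm{dis}}$ by Assumption~\ref{assump:dis_error}, contributing $\varepsilon_{\mathrm{dis}}T$.
\item The MC piece is evaluated at $\bar X^h_{t_l}\sim P^h_{\theta,t_l}$, which is exactly the measure appearing in Lemma~\ref{thm:mc_estimation}.
\end{itemize}
Applying Lemma~\ref{thm:mc_estimation} for each $l$ with $\delta/L$ and a union bound, the MC piece contributes $\sum_l (t_l-t_{l-1})\sigma_{t_l}^{-2}\,\cO(\log(M/\delta)\sqrt{\log(1/\delta)}M^{-1/6})$. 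On a uniform grid this equals $\kappa_\sigma\cdot(\cdot)$. The $\log L$ introduced by the union bound is absorbed into constants, or into $\log(M/\delta)$ since $M$ is large.

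\textbf{Main obstacle.} The delicate point is that $\nabla\log\hat h$ is itself random: it is built from MC rollouts drawn at the current state. The high-probability event of Lemma~\ref{thm:mc_estimation} must therefore be read as a statement about the estimator as a function, conditional on which the path-space Girsanov comparison is deterministic. The plan is to treat the MC randomness as a frozen random drift field, valid on the good event. The event is formed by the union bound over $l$, and the KL bound is applied conditionally on it. A secondary check is that the truncation $\eta_{t_l}$ keeps $\nabla\log\hat h$ bounded, so Girsanov is legitimate. Combining Steps 1 and 2 yields the stated bound.
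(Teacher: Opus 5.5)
Your proposal is correct and follows the paper's proof essentially step for step. It uses the same triangle inequality through $P_\theta(\cdot|\cE_{\bar X_0})$ and the same $\cO(\rho^{-1})\,\mathrm{TV}(P_{\mathrm{data}},P_\theta)$ bound on the conditioning term: your test-set argument gives constant $2$, and the paper's density computation gives the same order. The sampling term is handled by the same Girsanov--Pinsker bound (the paper cites Theorem 9 of \citet{chen2022sampling}), with the drift error split into the $\varepsilon_{\mathrm{dis}}$ piece and the Monte Carlo piece, controlled by Lemma~\ref{thm:mc_estimation} at level $\delta/L$ plus a union bound.

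One caveat, which applies equally to the paper's argument. The exact tilted process has marginals $P^h_{\theta,t_l}$ and terminal law $P_\theta(\cdot|\cE_{\bar X_0})$ only if it starts from the tilted law $p_{\theta,T}\,h(\cdot,T)/\PP(\cE_{\bar X_0})$, not from $\mathcal N(0,I)$. Your remark that no initialization term appears is therefore a tacit assumption (for example, $h(\cdot,T)$ approximately constant), not a consequence of the setup.
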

The proof of Theorem~\ref{thm:tv_decomposition} is deferred to Appendix~\ref{append:tv_decomposition}. 

Theorem~\ref{thm:tv_decomposition} decomposes the discrepancy into two terms. The first term, $\frac{1}{\rho}\,\mathrm{TV}(P_{\mathrm{data}},P_\theta)$, consists of two interpretable factors. The multiplier $1/\rho$ reflects the intrinsic difficulty of the conditional generation task. When $\mathbb{P}(\mathcal{E}_{\bar X_0})$ is small, the pre-trained model rarely generates the desired outcomes, thereby posing a significant challenge to achieving high rewards. The second factor, $\mathrm{TV}(P_{\mathrm{data}},P_\theta)$, measures the discrepancy between the generated distribution and the ground truth data distribution. This factor corresponds to the explicit convergence rates of diffusion models established in prior literature \citep{block2020generative,chen2023score,wibisono2024optimal}. Notably, \citet{oko2023diffusion} provided minimax rate guarantees. In our analysis, we treat it as an irreducible error inherited from the pre-training phase.

The second term bounds the total sampling error, combining the MC approximation error and the discretization error. The term $\varepsilon_{\mathrm{dis}}\,T$ accounts for the discretization error introduced by the piecewise-constant approximation of $\nabla\log h$ in \eqref{eq:piecewise_sde_corrected_piecewise}. Moreover, $\kappa_\sigma=\frac{T}{L}\sum_{l=1}^L \sigma_{t_l}^{-2}$ summarizes the accumulation of noise-scale
effects, under the standard DDPM/VP setting \citep{ho2020denoising},
$\kappa_\sigma=\mathcal{O}\left(\log(T/t_1)\right)$.

Together, the theorem implies that when $\mathrm{TV}(P_{\mathrm{data}}, P_{\theta})\to 0$, $\varepsilon_{\mathrm{dis}}\to 0$, and $M$ increases, $\hat{P}_{\theta}^h$ converges to $P_{\mathrm{data}}(\cdot| \cE_{ X_0})$ with high probability in terms of the total variation distance.

\section{Experiments}
\label{sec:expeiment}

In this section, we first instantiate a practically efficient version of \algname{} and detail our specific choice of the $h$-function (Section~\ref{sec:practical_ver}). Subsequently, we validate that \algname{} effectively adapts the reward distribution of generated samples toward higher value regions (Section~\ref{sec:exp_adapt_dist}). Finally, we demonstrate the performance of \algname{} on offline reinforcement learning tasks (Section~\ref{sec:exp_rl}).

\subsection{Practical Instantiation of \algname{}}
\label{sec:practical_ver}

\begin{algorithm}[H]
\caption{\algname: Practical version}
\label{alg:DOIT_practical}
\begin{algorithmic}[1]
\REQUIRE {Pre-trained score $s_\theta(x,t)$; correction strength $\gamma$;
number of MC samples $M$;
 time threshold $l^*$; truncation levels $\{\eta_{t_l}\}_{l=1}^L$.}

\STATE Sample $x_{t_L}$ from $\mathcal N(0,I)$.

\FOR{$l = L, L-1, \dots, 1$}
    \IF{$1 < l \le l^*$}
        \STATE Sample $\{x_{t_{l-1}}^{(m)}\}_{m=1}^M$ from $\mathcal{N}\left(\mu_{t_l}(x_{t_l},s_\theta),\sigma^2_{t_l} I \right)$.
        \STATE Compute $\{\hat{x}_0^{(m)}\}_{m=1}^M$ via~\eqref{eq:x_0_surrogate}.
        \STATE Compute $\nabla \log \hat{h}(x_{t_l},t_l)$ in~\eqref{eq:mc_estimator} via $\{\hat{x}_0^{(m)}\}_{m=1}^M$.
        \STATE $\nabla \log \hat{p}_{\theta,t_l}^h(x_{t_l}) \leftarrow
            s_\theta(x_{t_l},t_l) + \gamma\,\nabla \log \hat{h}(x_{t_l},t_l)$.
        \STATE Sample $x_{t_{l-1}}$ from $ \mathcal{N}(\mu_{t_l}(x_{t_l},\nabla \log \hat{p}_{\theta,t_l}^h), \sigma^2_{t_l} I )$.
    \ELSE
        \STATE Sample $x_{t_{l-1}}$ from $\mathcal{N}\left(\mu_{t_l}(x_{t_l},s_\theta),\sigma^2_{t_l} I \right)$.
    \ENDIF
\ENDFOR

\STATE {\bf Return} $x_0$.
\end{algorithmic}
\end{algorithm}


In the prototypical version of \algname{} (Algorithm~\ref{alg:doit_pseudocode}), approximating $\nabla\log \hat{h}(x_{t_l},t_l)$ at each state $x_{t_l}$ necessitates $M$ full backward rollouts to obtain the terminal states $\{x_0^{(m)}\}_{m=1}^M$. This process incurs a prohibitive computational cost due to the additional Number of Function Evaluations (NFEs) of the score network $s_\theta$. To circumvent this bottleneck, we introduce an efficient surrogate $\hat{x}_0^{(m)}$ to approximate $x_0^{(m)}$.


At state $x_{t_l}$, we first sample $M$ one-step lookahead states $\{x_{t_{l-1}}^{(m)}\}_{m=1}^M$ from the transition kernel~\eqref{eq:diffusion_posterior}. We then approximate the terminal state $x_0^{(m)}$ using Tweedie’s formula~\citep{efron2011tweedie}, which calculates the posterior mean given a noisy state:
\begin{align*}
    \mathbb{E}[x_0 \mid x_{t_{l-1}}^{(m)}] = \frac{x_{t_{l-1}}^{(m)} + (1 - e^{-t_{l-1}}) s_\theta(x_{t_{l-1}}^{(m)}, t_{l-1})}{e^{-t_{l-1}/2}},
\end{align*}
where the coefficients are determined by the forward SDE~\eqref{eq:forward_sde}. Evaluating this expression exactly still requires calling the score network at the new lookahead states. To avoid this, we further reuse the score $s_\theta(x_{t_l}, t_l)$ computed at step $t_l$. Consequently, we define the surrogate $\hat{x}_0^{(m)}$ as:
\begin{align}
\label{eq:x_0_surrogate}
    \hat{x}_0^{(m)}
    = e^{t_{l-1}/2}(x_{t_{l-1}}^{(m)} + (1 - e^{-t_{l-1}}) s_\theta(x_{t_l}, t_l)).
\end{align}
By substituting $x_0^{(m)}$ with $\hat{x}_0^{(m)}$, \algname{} incurs zero additional NFEs of the score network compared to the vanilla generation process. We empirically validate this approximation in Section~\ref{sec:exp_adapt_dist}, demonstrating that it achieves performance comparable to full trajectory simulation.

Building on this surrogate, we present Algorithm~\ref{alg:DOIT_practical}---an efficient implementation of \algname. Notably, to further enhance flexibility and stability, we introduce two hyperparameters: a time threshold $l^*$ and a correction strength $\gamma$, which restrict the application of the Doob correction to specific timesteps and control its magnitude, respectively.

\paragraph{Choice of $h$-function.}
Practical application requires a tailored choice of the $h$-function. In reinforcement learning and energy-based modeling, the target density is typically defined via \textit{exponential tilting} of the base distribution~\citep{peters2010relative,haarnoja2018soft}, taking the form $q(x) \propto p_{\theta,0}(x)\exp(r(x)/\tau)$, where $\tau > 0$ is a temperature parameter controlling the sharpness of the reward distribution: smaller $\tau$ forces a stronger concentration on high-reward regions. Following this paradigm, we specify the $h$-function at the terminal time $t=0$ as
\begin{align}
    \label{eq:h_terminal}
    h(x, 0) \propto \exp\left({r(x)}/{\tau}\right).
\end{align}
This $h$-function corresponds to a particular choice of $\cE_{\bar{X}_0}$, emphasizing high-reward regions. A detailed derivation is deferred to Appendix~\ref{app:event_construction}. We run Algorithm~\ref{alg:DOIT_practical} with the $h$-function in~\eqref{eq:h_terminal} for all subsequent experiments.

\subsection{\algname{} Adapts the Sampling Distribution}
\label{sec:exp_adapt_dist}

We demonstrate that \algname{} adapts the generation process for each sample, effectively tilting the entire reward distribution toward higher-value regions. We conduct experiments by applying \algname{} to \texttt{Stable Diffusion v1.5}~\citep{Rombach_2022_CVPR} to improve rewards of text-to-image generation.


We first use the LAION Aesthetic Score~\citep{beaumont_laion_aesthetic_predictor_2022} as the reward. We generate $K=32$ images and analyze their aesthetic score distributions. To assess performance and robustness, we sweep across the sampling temperature $\tau$ and the correction strength $\gamma$, comparing the resulting reward distributions against the pre-trained model.

 \begin{figure*}[t]
    \centering
    \includegraphics[width=\linewidth]{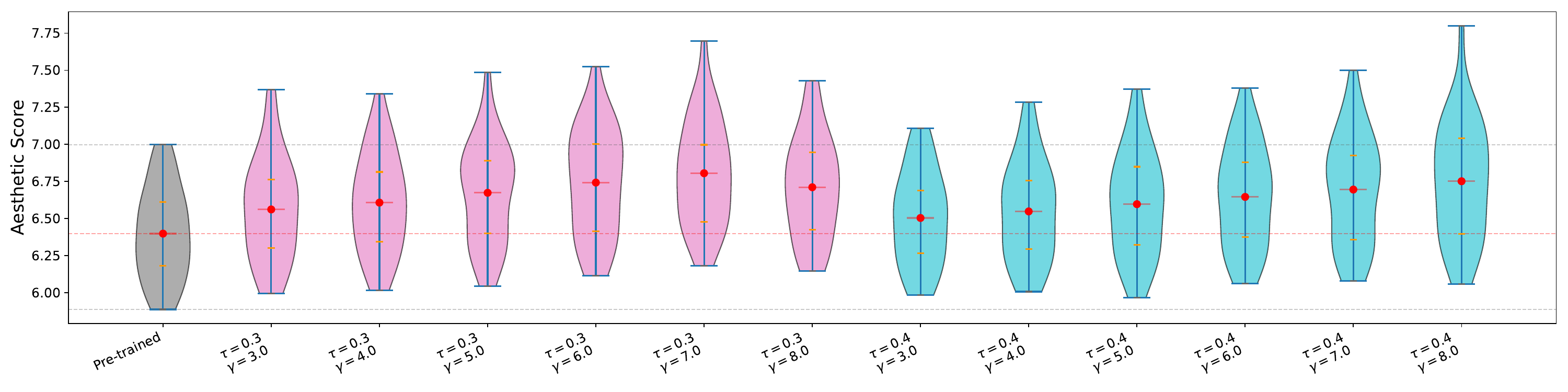}
    \caption{
    Violin plots of aesthetic scores for the samples generated by \texttt{Stable Diffusion v1.5}, comparing the vanilla generation result and applying \algname{} across different $(\tau,\gamma)$ settings.
    The blue bars indicate the minimum and maximum scores, the orange bars represent the first \& third quantiles, and the red marker denotes the mean.
    }
    \label{fig:violin}
\end{figure*}

The results, summarized in Figure~\ref{fig:violin}, demonstrate that our method successfully transports the pre-trained reward distribution to high-reward counterparts; some of the example images are shown in Figure~\ref{fig:demo}. Notably, we identify a trade-off region: lower temperatures (corresponding to sharper reward landscapes) require lower correction strengths to maintain stability, whereas higher temperatures accommodate stronger correction. Within this regime, \algname{} robustly improves aesthetic scores. Experimental details and sensitivity ablation studies on $(\tau,\gamma)$ are provided in Appendix~\ref{app:exp_sd_1.5}.

\paragraph{Comparison of Surrogate and Full Simulation.}
\begin{table}[!ht]
\vspace{-0.5em}
\centering
\small
\caption{Comparison of aesthetic score statistics between the surrogate and full trajectory simulation methods. Runtime (seconds) is reported as the generation time per image, excluding reward evaluation time.}
\begin{tabular}{c|ccc|c}
\hline
 & Min & Mean &  Max & Runtime \\
\hline
Pre-trained
& 5.737 $\pm$ 0.102
& 6.372 $\pm$ 0.031
& 7.052 $\pm$ 0.085
& 1.260 $\pm$ 0.046\\ 
\hline
 Surrogate
& 5.987 $\pm$ 0.180
& 6.726 $\pm$ 0.072
& 7.501 $\pm$ 0.055
& 1.712 $\pm$ 0.036  \\ 
Simulation
& 6.006 $\pm$ 0.099
& 6.714 $\pm$ 0.055
& 7.553 $\pm$ 0.113
& 39.584 $\pm$ 0.142 \\ 
\hline
\end{tabular}
\label{tab:tw_vs_sim}
\end{table}
We compare the performance of the pre-trained model against \algname{} (with $\tau=0.3, \gamma=6.0$) implemented with either the surrogate or full trajectory simulation to approximate the correction term. Table~\ref{tab:tw_vs_sim} shows that the surrogate method achieves reward improvements comparable to the full simulation baseline. Crucially, the surrogate approach requires significantly lower runtime than the full simulation.

\paragraph{Integration with Resampling Based Methods.} As discussed in Section~\ref{sec:related_work}, a distinct line of research focuses on reweighting and resampling strategies. Intuitively, rather than locally correcting the trajectory as in \algname{}, these methods start with multiple candidates and evaluate them at intermediate steps, subsequently reweighting and resampling to favor higher-reward ones. Because these approaches operate on a population level while \algname{} improves the per-sample reward, they are orthogonal and complementary. 

To demonstrate the effect of this integration, we combine \algname{} with two representative strategies: the fundamental Best-of-K (BoK) baseline and the state-of-the-art method BFS~\citep{zhang2025inference}. Following the experimental setup in~\citet{zhang2025inference}, we use ImageReward~\cite{xu2023imagereward} as the reward feedback and compare against several recent reweighting-based baselines, including FK-Steering~\citep{singhal2025general}, DAS~\citep{kim2025test}, TreeG~\citep{guo2025training}, and SVDD~\citep{li2024derivative}. Experimental details are in Appendix~\ref{app:exp_search}.
\begin{table*}[h]
\centering
\small
\caption{Performance comparison on ImageReward. $K$ represents the number of particles. Combining \algname{} with  both BoK and BFS provides a significant boost.}
\label{tab:bon_bfs_doit}
\resizebox{1.0\linewidth}{!}{
\begin{tabular}{c|cccccccc}
\hline
$K$
& BoK
& \cellcolor{gray!30} BoK + \algname{} (Ours)
& FK-Steering
& DAS
& TreeG
& SVDD
& BFS
& \cellcolor{gray!30} BFS + \algname{} (Ours) \\
\hline
4
& $0.702 \pm 0.057$
& \cellcolor{gray!30} $0.875 \pm 0.017$
& $0.743 \pm 0.037$
& $0.878 \pm 0.028$
& $0.860 \pm 0.033$
& $0.667 \pm 0.076$
& $0.882 \pm 0.029$
& \cellcolor{gray!30} $0.950 \pm 0.016$ \\
8
& $0.896 \pm 0.031$
& \cellcolor{gray!30} $1.001 \pm 0.015$
& $0.926 \pm 0.042$
& $1.052 \pm 0.033$
& $1.023 \pm 0.018$
& $0.775 \pm 0.087$
& $1.087 \pm 0.031$
& \cellcolor{gray!30} $1.115 \pm 0.009$ \\
\hline
\end{tabular}}
\end{table*}

The results are presented in Table~\ref{tab:bon_bfs_doit}. We observe that combining \algname{} with both BoK and BFS yields significant performance gains. Because \algname{} tilts the sampling distribution toward high-reward regions, the candidate pools generated for BoK and BFS contain higher-quality samples, thereby improving the final outcome.

\subsection{Performance Evaluation on Offline RL Benchmarks}
\label{sec:exp_rl}

Diffusion policies, utilizing diffusion models as action generators, have recently emerged as a powerful paradigm in robotics~\citep{chi2025diffusion}. However, because they are typically trained on offline datasets, adapting them to inference-time objectives remains a challenge. In this section, we apply \algname{} to the offline reinforcement learning setting, adapting diffusion policies toward higher inference-time returns to demonstrate the effectiveness of our method. We follow the setup of~\citet{lu2023contrastive}, utilizing their pre-trained diffusion policies on the D4RL benchmark~\citep{fu2020d4rl} and employing their Q-functions as reward feedback to guide the generation of action sequences. 
We evaluate performance on standard D4RL locomotion tasks, and compare \algname{} against training-based baselines, including IQL~\citep{kostrikov2021offline}, Diffuser~\citep{janner2022planning}, D-QL~\citep{wang2022diffusion}, and QGPO~\citep{lu2023contrastive}, as well as recent inference-time methods such as TFG~\citep{ye2024tfg}, DAS~\citep{kim2025test}, and TTS~\citep{zhang2025inference}. Implementation details are in Appendix~\ref{app:exp_rl}.


\begin{table*}[h]
\centering
\small
\setlength{\tabcolsep}{6pt}
\caption{Performance comparison of different methods on D4RL locomotion tasks. Training-time methods are marked with $\clubsuit$ and inference-time methods are marked with $\spadesuit$.
For each task, the best-performing inference-time method has its mean highlighted in bold.}
\label{tab:locomotion_colored_best}
\renewcommand{\arraystretch}{1.15}
\resizebox{1.0\linewidth}{!}{\begin{tabular}{llcccc|ccccc}
\toprule
\textbf{Dataset} & \textbf{Environment} &
\textbf{IQL ($\clubsuit$)} &
\textbf{Diffuser ($\clubsuit$)} &
\textbf{D-QL ($\clubsuit$)} &
\textbf{QGPO ($\clubsuit$)} &
\textbf{TFG ($\spadesuit$)} &
\textbf{DAS ($\spadesuit$)} &
\textbf{TTS ($\spadesuit$)} &
\cellcolor{gray!30}
\textbf{\algname{} ($\spadesuit$) (Ours)} \\
\midrule

Medium-Expert & HalfCheetah &
86.7 &
79.8 &
96.1 &
93.5 &
$90.2\pm0.2$ &
$93.3\pm0.3$ &
${\bf 93.9}\pm0.3$ &
\cellcolor{gray!30}
${\bf 93.9}\pm0.5$ \\

Medium-Expert & Hopper &
91.5 &
107.2 &
110.7 &
108.0 &
$100.2\pm3.5$ &
$105.4\pm5.1$ &
$104.4\pm3.1$ &
\cellcolor{gray!30}
${\bf 107.7}\pm5.6$ \\

Medium-Expert & Walker2d &
109.6 &
108.4 &
109.7 &
110.7 &
$108.1\pm0.1$ &
$111.4\pm0.1$ &
$111.4\pm0.1$ &
\cellcolor{gray!30}
${\bf 113.2}\pm0.7$ \\

\midrule
Medium & HalfCheetah &
47.4 &
44.2 &
50.6 &
54.1 &
$53.1\pm0.1$ &
$53.4\pm0.1$ &
$54.8\pm0.1$ &
\cellcolor{gray!30}
${\bf 55.3}\pm0.3$ \\

Medium & Hopper &
66.3 &
58.5 &
82.4 &
98.0 &
$96.2\pm0.5$ &
$71.3\pm2.7$ &
$99.5\pm1.7$ &
\cellcolor{gray!30}
${\bf 101.0}\pm0.3$ \\

Medium & Walker2d &
78.3 &
79.7 &
85.1 &
86.0 &
$83.2\pm1.4$ &
$83.9\pm0.9$ &
$86.5\pm0.2$ &
\cellcolor{gray!30}
${\bf 86.9}\pm0.3$ \\

\midrule
Medium-Replay & HalfCheetah &
44.2 &
42.2 &
47.5 &
47.6 &
$45.0\pm0.3$ &
$42.2\pm0.1$ &
${\bf 47.8}\pm0.4$ &
\cellcolor{gray!30}
$46.9\pm0.3$ \\

Medium-Replay & Hopper &
94.7 &
96.8 &
100.7 &
96.9 &
$93.1\pm0.1$ &
$96.7\pm3.0$ &
$97.4\pm4.0$ &
\cellcolor{gray!30}
${\bf 101.8}\pm0.2$ \\

Medium-Replay & Walker2d &
73.9 &
61.2 &
94.3 &
84.4 &
$69.8\pm4.0$ &
$63.8\pm2.0$ &
$79.3\pm9.7$ &
\cellcolor{gray!30}
${\bf 82.0}\pm8.9$ \\

\midrule
\textbf{Average (Locomotion)} &  &
76.9 &
75.3 &
86.3 &
86.6 &
82.1 &
80.2 &
86.1 &
\cellcolor{gray!30}
{\bf 87.6} \\
\bottomrule
\end{tabular}}
\end{table*}

The results, summarized in Table~\ref{tab:locomotion_colored_best}, demonstrate the effectiveness of our approach. \algname{} achieves the best performance on 8 out of 9 tasks among inference-time adaptation methods, attaining an average score of \textbf{87.6}. Notably, it surpasses the previous state-of-the-art inference-time method (TTS) and competitive training-based baselines, confirming that our efficient adaptation strategy can unlock superior policy performance without requiring additional training.

\section{Conclusion and Limitations}
In this paper, we introduce \algname{}, a training-free inference-time adaptation method for steering pre-trained diffusion models toward high-reward behaviors under generic, potentially non-differentiable reward functions. Our key perspective is to formulate the problem as transporting
the pre-trained generative distribution to a high-reward target distribution. We realize the measure transporting it via Doob's $h$-transform, which induces an additive, time-dependent correction to the sampling dynamics while keeping the pre-trained score model frozen. We develop a Monte Carlo (MC) approximation of the dynamic Doob correction term and establish an end-to-end high-probability convergence guarantee. Moreover, we demonstrate that \algname{} reliably concentrates samples in high-reward regions and achieves strong performance on offline RL benchmarks with competitive sampling efficiency.

One potential limitation of our work is the effectiveness of \algname{} can be sensitive to the probability of high-reward regions $\cE_{\bar X_0}$. Extremely rare high-reward regions can lead to large variance in approximation and reduced efficiency. We hope future work will address this limitation.

\bibliography{ref}
\bibliographystyle{plainnat}

\newpage
\appendix
\appendix
\onecolumn
\section{Specific Forms of Backward Kernels}
\label{app:backward_kernel}

In this section, we provide the explicit forms of the Gaussian transition kernels $p_\theta(x_{t_{l-1}} | x_{t_l})$ defined in Eq.~\eqref{eq:diffusion_posterior} for the sampling algorithms used in our experiments. We utilize the time discretization grid $0=t_0 < t_1 < \dots < t_L=T$, where the reverse process moves from $t_l$ to $t_{l-1}$.

Throughout this section, we relate the model's noise prediction $\epsilon_\theta(x_t, t)$ to the parameterized score function $s_\theta(x, t)$ via:
\begin{align*}
    \epsilon_\theta(x_t, t) = -\sqrt{1-\alpha_t}\, s_\theta(x_t, t).
\end{align*}

\subsection{DDIM Sampler}
The generalized Denoising Diffusion Implicit Model (DDIM)~\citep{song2020denoising} sampler updates the state based on the noise schedule $\alpha_t$ and a stochasticity hyperparameter $\eta \in [0, 1]$. When $\eta=0$, the process is deterministic; when $\eta=1$, it recovers the original DDPM~\citep{ho2020denoising} sampling process.

Given the noisy sample $x_{t_l}$ at time step $t_l$, we first estimate the clean data $\hat{x}_0$ using the model prediction $\epsilon_\theta(x_{t_l}, t_l)$:
\begin{align*}
    \hat{x}_0(x_{t_l}, t_l) = \frac{x_{t_l} - \sqrt{1-\alpha_{t_l}}\,\epsilon_\theta(x_{t_l}, t_l)}{\sqrt{\alpha_{t_l}}}.
\end{align*}
We define the variance of the transition kernel as:
\begin{align*}
    \sigma_{t_l}^2(\eta) = \eta^2 \left( \frac{1-\alpha_{t_{l-1}}}{1-\alpha_{t_l}} \right) \left( 1 - \frac{\alpha_{t_l}}{\alpha_{t_{l-1}}} \right).
\end{align*}
The update rule for the subsequent sample $x_{t_{l-1}}$ combines the estimated clean data, the direction pointing to $x_{t_l}$, and random noise:
\begin{align*}
    x_{t_{l-1}} = \sqrt{\alpha_{t_{l-1}}} \,\hat{x}_0(x_{t_l}, t_l) + \sqrt{1-\alpha_{t_{l-1}} - \sigma_{t_l}^2(\eta)} \,\epsilon_\theta(x_{t_l}, t_l) + \sigma_{t_l}(\eta) \, Z,
\end{align*}
where $Z \sim \mathcal{N}(0, I)$. Substituting $\hat{x}_0$ back into the update rule yields the transition mean $\mu_\theta$. The kernel is given by:
\begin{align*}
    p_\theta(x_{t_{l-1}} | x_{t_l}) = \mathcal{N}\Bigl( \mu_{t_l}(x_{t_l}, s_\theta), \sigma_{t_l}^2(\eta) I \Bigr),
\end{align*}
where the mean is:
\begin{align*}
    \mu_{t_l}(x_{t_l}, s_\theta) = \sqrt{\frac{\alpha_{t_{l-1}}}{\alpha_{t_l}}} x_{t_l} + \left( \sqrt{1-\alpha_{t_{l-1}} - \sigma_{t_l}^2(\eta)} - \sqrt{1-\alpha_{t_l}}\sqrt{\frac{\alpha_{t_{l-1}}}{\alpha_{t_l}}} \right) \epsilon_\theta(x_{t_l}, t_l).
\end{align*}

\subsection{Euler Ancestral SDE}
For the Euler ancestral sampler~\citep{karras2022elucidating}, we adopt the EDM noise parameterization. We first convert the noise schedule values $\alpha_{t_l}$ and $\alpha_{t_{l-1}}$ into the EDM noise standard deviations:
\begin{align*}
    \tilde{\sigma}_{t_l} = \sqrt{\frac{1-\alpha_{t_l}}{\alpha_{t_l}}}, 
    \qquad
    \tilde{\sigma}_{t_{l-1}} = \sqrt{\frac{1-\alpha_{t_{l-1}}}{\alpha_{t_{l-1}}}}.
\end{align*}
The ancestral SDE step splits the transition from $\tilde{\sigma}_{t_l}$ to $\tilde{\sigma}_{t_{l-1}}$ into a deterministic ``down'' step and a stochastic ``up'' step. The variances for these components are defined as:
\begin{align*}
    \tilde{\sigma}_{t_l,\mathrm{up}}^{\,2}
    &= \frac{\tilde{\sigma}_{t_{l-1}}^2}{\tilde{\sigma}_{t_l}^2}
      \bigl(\tilde{\sigma}_{t_l}^2 - \tilde{\sigma}_{t_{l-1}}^2\bigr), \\
    \tilde{\sigma}_{t_l,\mathrm{down}}^{\,2}
    &= \tilde{\sigma}_{t_{l-1}}^2 - \tilde{\sigma}_{t_l,\mathrm{up}}^{\,2}.
\end{align*}
We define the EDM drift direction $d_{t_l}$ directly via the noise prediction:
\begin{align*}
    d_{t_l} = \epsilon_\theta(x_{t_l},t_l).
\end{align*}
A single Euler ancestral SDE step from $t_l$ to $t_{l-1}$ takes the form:
\begin{align*}
    x_{t_{l-1}} = x_{t_l} + d_{t_l}\bigl(\tilde{\sigma}_{t_l,\mathrm{down}} - \tilde{\sigma}_{t_l}\bigr) + \tilde{\sigma}_{t_l,\mathrm{up}}\, Z, \qquad Z \sim \mathcal{N}(0,I).
\end{align*}
Therefore, the transition kernel matches the Gaussian form in Eq.~\eqref{eq:diffusion_posterior}:
\begin{align*}
    p_\theta(x_{t_{l-1}} | x_{t_l}) = \mathcal{N}\Bigl( \mu_{t_l}(x_{t_l}, s_\theta), \sigma_{t_l}^2 I \Bigr),
\end{align*}
with the mean and variance given by:
\begin{align*}
    \mu_{t_l}(x_{t_l}, s_\theta) &= x_{t_l} + \epsilon_\theta(x_{t_l},t_l)\, \bigl(\tilde{\sigma}_{t_l,\mathrm{down}} - \tilde{\sigma}_{t_l}\bigr), \\
    \sigma_{t_l}^2 &= \tilde{\sigma}_{t_l,\mathrm{up}}^{\,2}.
\end{align*}
\section{Proofs}
\subsection{Proof of Lemma~\ref{lem:bounded_tilting_via_event}}\label{append:proof_lem_bounded_tilting_via_event}
\begin{proof}
According to the definition of the tilted density at time $t$ in \eqref{eq:reweight_h}, taking $t=0$ yields:\begin{align}p^h_{\theta,0}(x) = \frac{h(x, 0) p_{\theta,0}(x)}{\PP(\mathcal{E}_{\bar{X}_0})}. \label{eq:tilted_t0}
\end{align}
We first compute the term $h(x,0)$. By the definition of the $h$-function and the event $\mathcal{E}_{\bar{X}_0}$, we have:
\begin{align*}
h(x, 0) &= \PP(\mathcal{E}_{\bar{X}_0} \mid \bar{X}_0 = x) \\
&= \PP\left(U \leq \frac{q(\bar X_0)}{C_q\cdot p_{\theta,0}(\bar X_0)} \middle| \bar{X}_0 = x\right).
\end{align*}
Since the uniform random variable $U$ is independent of the generated sample $\bar{X}_0$, the conditional probability simplifies to:
\begin{align*}
h(x, 0) = \PP\left(U \leq \frac{q(x)}{C_q\cdot p_{\theta,0}(x)}\right).
\end{align*}
Given that $U \sim \mathrm{Unif}(0,1)$ and the assumption $\|q/p_{\theta,0}\|_\infty \le C_q$ implies $0 \le \frac{q(x)}{C_q p_{\theta,0}(x)} \le 1$, the probability is exactly the value of the threshold,
\begin{align}
h(x, 0) = \frac{q(x)}{C_q\cdot p_{\theta,0}(x)}. \label{eq:h_0_val}
\end{align}

Next, we compute the marginal probability of the event $\PP(\mathcal{E}_{\bar{X}_0})$,
\begin{align}
\PP(\mathcal{E}_{\bar{X}_0}) = \int h(x,0) p_{\theta,0}(x) =\int \frac{q(x)}{C_q\cdot p_{\theta,0}(x)} p_{\theta,0}(x) \mathrm{d}x = \frac{1}{C_q}.\label{eq:prob_event}
\end{align}
Finally, substituting \eqref{eq:h_0_val} and \eqref{eq:prob_event} back into \eqref{eq:tilted_t0}, we obtain,
\begin{align*}
p^h_{\theta,0}(x) &= \frac{\frac{q(x)}{C_q\cdot p_{\theta,0}(x)} \cdot p_{\theta,0}(x)}{1/C_q}= q(x).
\end{align*}
This completes the proof.
\end{proof}

\subsection{Proof of Lemma~\ref{lem:score_h_gradient}}
\label{app:proof_lem_score_h_gradient}
\begin{proof}
Recalling the definition of the $h$-function in Definition~\ref{def:h-function}, we can write $\nabla_{x} h(x,t_l)$ as,
\begin{align}\label{eq:log_h_simp}
    \nabla_{x} h(x,t_l)
    &= \nabla_{x} \pathmeasure(\mathcal{E}_{\bar X_0} | \bar X_{t_l}=x)\notag\\
    &= \nabla_{x} \int \pathmeasure(\mathcal{E}_{\bar X_0} | \bar X_0=x_0) \, p_{\theta,0|t_l}(x_0|x) \diff x_0,
\end{align}
where $p_{\theta,0|t_l}(\bar X_0|\bar X_{t_l})$ represents the transition density of the conditional distribution $\bar X_0|\bar X_{t_l}$. In the backward process, the transition density $p_{\theta,0|t_l}(x_0|x_{t_l})$ is induced by \eqref{eq:diffusion_posterior}. It can be represented as:
\begin{align}
    p_{\theta,0|t_l}(x_0|x_{t_l})=\int \prod_{j=1}^l \phi_\theta(x_{t_{j-1}}| x_{t_j}) \diff x_{t_1} \dots \diff x_{t_{l-1}},\label{eq:decompose_p_0_l-1}
\end{align}
where $\phi_{\theta}$ is the Gaussian density defined in \eqref{eq:diffusion_posterior}.
Plugging \eqref{eq:decompose_p_0_l-1} into \eqref{eq:log_h_simp}, we get,
\begin{align}
\label{eq:sim_numerator}
    \;&\nabla_{x} \int \pathmeasure(\mathcal{E}_{\bar X_0} | \bar X_0=x_0) \, p_{\theta,0|t_l}(x_0|x) \diff x_0 \notag\\
    =\;&\nabla_{x}\int
   \pathmeasure(\mathcal{E}_{\bar X_0}  | \bar X_0=x_0)
    \left(\int \prod_{j=1}^l \phi_\theta(x_{t_{j-1}}| x_{t_j}) \diff x_{t_1} \dots \diff x_{t_{l-1}}  \right) \diff x_0 \notag\\
    = \;&\int\!\!\int
   \pathmeasure(\mathcal{E}_{\bar X_0}  | \bar X_0=x_0) \,
    \nabla_{x} \phi_{\theta}(x_{t_{l-1}}|x) \,
    p_{\theta,0|t_{l-1}}(x_0|x_{t_{l-1}})
    \diff x_{t_{l-1}} \diff x_0 \notag\\
    = \;&\int\!\!\int
   h(x_0,0) \,
    \nabla_{x} \log\phi_{\theta}(x_{t_{l-1}}|x) \,
    \phi_{\theta}(x_{t_{l-1}}|x)p_{\theta,0|t_{l-1}}(x_0|x_{t_{l-1}})
    \diff x_{t_{l-1}} \diff x_0 \notag\\
    =\;& \EE\left[
h(\bar X_0,0)
\nabla_{\bar X_{t_l}}\log \phi_{\theta}(\bar X_{t_{l-1}}| \bar X_{t_l})
\Big|\bar X_{t_l}=x
\right],
\end{align}
where the expectation is taken over conditional distribution of the backward trajectory
$(\bar X_{t_{l-1}},\dots,\bar X_0)| \bar X_{t_l}=x$.

The second equality holds due to $\int h(x_0,0)p_{\theta,0|t_{l-1}} (x_0|x_{t_{l-1}})\diff x_0\leq 1$. Furthermore, the Gaussian transition density $\phi_\theta(\cdot|x)$ is continuously differentiable with respect to $x$, and its gradient is dominated by an integrable function due to the exponential decay of the Gaussian tail. Then the interchange of differentiation and integration is justified by the Leibniz integral rule.

Equation~\eqref{eq:sim_numerator} yields the desired expression, which concludes the proof.
\end{proof}
\subsection{Proof of Lemma~\ref{thm:mc_estimation}}\label{app:proof_mc_estimation}
\begin{proof}
In the beginning of the proof, it's helpful to discuss about the Monte Carlo (MC) randomness in the backward sampler. It actually comes from the Gaussian noises used in each reverse transition.

For each trajectory $m\in\{1,\dots,M\}$, let $\{z^{(m)}_j\}_{j=1}^l$ be i.i.d. with
$z^{(m)}_j\sim\cN(0,I_d)$, independent across $j$ and $m$. By the transition kernel in \eqref{eq:diffusion_posterior}
\begin{align}
\bar X_{t_{l-1}}| \bar X_{t_l} = x_{t_l} \sim \mathcal{N}\left(\mu_{t_l}(x_{t_l},s_\theta),\sigma^2_{t_l} I \right),
\end{align}
By the reparameterization trick, we can couple one step as
\[
x^{(m)}_{t_{j-1}}=\mu_{t_j}\left(x^{(m)}_{t_j},s_{\theta}\right)+\sigma_{t_j}\,z^{(m)}_j,
\qquad j=l,l-1,\dots,1,\quad x^{(m)}_{t_l}=x_{t_l}.
\]
Hence the whole backward path is a deterministic function of $(x_{t_l}, z^{(m)}_{1:l})$,
\[
(x^{(m)}_{t_0},\dots,x^{(m)}_{t_{l-1}})=\cG_{\theta,l}\left(x_{t_l}; z^{(m)}_{1:l}\right),
\]
so two MC estimators $\hat h$ and $\nabla \hat h$ can be written as functions of $\cG_{\theta,l}\left(x_{t_l}; z^{(m)}_{1:l}\right)$,
making the MC randomness explicit through $\{z^{(m)}_{1:l}\}_{m=1}^M$ (conditional on $x_{t_l}$).

For simplicity, denote by
\[
z_{\mathrm{mc}}=\{z^{(m)}_j\}_{m=1,\dots,M;\; j=1,\dots,l},\qquad z^{(m)}_j\stackrel{\text{i.i.d.}}{\sim}\cN(0,I_d),
\]
the Gaussian noises used to generate the $M$ Monte Carlo backward trajectories at time $t_l$
(conditional on a given input $x_{t_l}$).

We now begin the proof. To bound the discrepancy between 
$\nabla \log \hat{h}(x_{t_l},t_l)$ and $\nabla \log h(x_{t_l},t_l)$, we write
\begin{align}
    &\;\norm{\nabla \log \hat{h}(x_{t_l},t_l)-\nabla \log h(x_{t_l},t_l)}_2\\
    \leq &\; \left\|\frac{\nabla \hat h(x_{t_l},t_l)-\nabla h(x_{t_l},t_l)}{\hat h(x_{t_l},t_l) \vee \eta_{t_l}}
\right\|_2+\left\|
\nabla h(x_{t_l},t_l)\,\frac{h_{\theta}(x_{t_l},t_l)-(\hat h(x_{t_l},t_l) \vee \eta_{t_l})}{\left(\hat h(x_{t_l},t_l) \vee \eta_{t_l}\right)\,h(x_{t_l},t_l)}\right\|_2 \notag \\
\leq&\; \frac{\left\|\nabla \hat h(x_{t_l},t_l)-\nabla h(x_{t_l},t_l)\right\|_2}{\eta_{t_l}}+\norm{\nabla \log h(x_{t_l},t_l)}_2\frac{\left|h(x_{t_l},t_l)-(\hat h(x_{t_l},t_l) \vee \eta_{t_l})\right|}{\eta_{t_l}}. 
\label{eq:concentrate_h}
\end{align}

By $(a+b)^2\le 2a^2+2b^2$, \eqref{eq:concentrate_h} implies
\begin{align}
    &\;\EE_{X_{t_l}\sim P^h_{\theta,t_l}}\left[ \norm{\nabla \log \hat{h}(X_{t_l},t_l)-\nabla \log h(X_{t_l},t_l)}_2^2 \right]\notag\\
    \leq &\; \underbrace{\EE_{X_{t_l}\sim P^h_{\theta,t_l}} \left[\frac{2\left\|\nabla \hat h(X_{t_l},t_l)-\nabla h(X_{t_l},t_l)\right\|^2_2}{\eta_{t_l}^2}\right]}_{\cH_1}\notag\\
    +&\;\underbrace{\EE_{X_{t_l}\sim P^h_{\theta,t_l}} \left[2\norm{\nabla \log h(X_{t_l},t_l)}_2\frac{\left|h(X_{t_l},t_l)-(\hat h(X_{t_l},t_l) \vee \eta_{t_l})\right|^2}{\eta_{t_l}^2} \right]}_{\cH_{1,2}}.\label{eq:h_1_decomposition}
\end{align}
For the second term $\cH_{1,2}$, we have
\begin{align}
    &\;\EE_{X_{t_l}\sim P^h_{\theta,t_l}} \left[2\norm{\nabla \log h(X_{t_l},t_l)}_2\frac{\left|h(X_{t_l},t_l)-(\hat h(X_{t_l},t_l) \vee \eta_{t_l})\right|^2}{\eta_{t_l}^2} \right]\notag\\
    \leq &\;\EE_{X_{t_l}\sim P^h_{\theta,t_l}} \left[4\left\|\nabla_{X_{t_l}}\log h(X_{t_l},t_l)\right\|_2^2\frac{((h(X_{t_l},t_l)\vee \eta_{t_l})-(\hat h(X_{t_l},t_l)\vee \eta_{t_l}))^2}{\eta_{t_l}^2}\right] \notag\\
    +&\;\EE_{X_{t_l}\sim P^h_{\theta,t_l}} \left[4\left\|\nabla_{X_{t_l}}\log h(X_{t_l},t_l)\right\|_2^2\frac{(h(X_{t_l},t_l)- (h(X_{t_l},t_l)\vee \eta_{t_l}))^2}{\eta_{t_l}^2}\right]\notag\\
    \leq &\;\underbrace{\EE_{X_{t_l}\sim P^h_{\theta,t_l}} \left[4\left\|\nabla_{X_{t_l}}\log h(X_{t_l},t_l)\right\|_2^2\frac{(h(X_{t_l},t_l)-\hat h(X_{t_l},t_l))^2}{\eta_{t_l}^2}\right]}_{\bar{\cH}_2} \notag \\
    +&\;\underbrace{\EE_{X_{t_l}\sim P^h_{\theta,t_l}} \left[4\left\|\nabla_{X_{t_l}}\log h(X_{t_l},t_l)\right\|_2^2\frac{(h(X_{t_l},t_l)- (h(X_{t_l},t_l)\vee \eta_{t_l}))^2}{\eta_{t_l}^2}\right]}_{\cH_3}.\label{eq:h_2_decomposition}
\end{align}
The second inequality holds because $f(x)=x\vee \eta_{t_l}=\max(x,\eta_{t_l})$ is 1-Lipschitz.
We derive an upper bound for $\bar{\cH}_2$ using the following lemma.
\begin{lemma}\label{lem:log_h_nabla_h}
Fix a discretization index $l\in\{1,\dots,L\}$. The following inequality holds,
\begin{align*}
    \|\nabla_{x_{t_l}}\log h(x_{t_l},t_l)\|_2\leq \frac{\|\nabla_{x_{t_l}}\mu_{t_l}(x_{t_l},s_{\theta})\|_2}{\sigma_{t_l}}\sqrt{2\log \frac{1}{h(x_{t_l},t_l)}}.
\end{align*}
\end{lemma}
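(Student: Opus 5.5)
We start from Lemma~\ref{lem:score_h_gradient}, which gives
\[
\nabla h(x,t_l)=\EE\bigl[h(\bar X_0,0)\,\nabla_{x}\log\phi_\theta(\bar X_{t_{l-1}}\mid x)\,\big|\,\bar X_{t_l}=x\bigr].
\]
Since $\phi_\theta(\cdot\mid x)=\mathcal N(\mu_{t_l}(x,s_\theta),\sigma_{t_l}^2 I)$, we have
\[
\nabla_x\log\phi_\theta(x_{t_{l-1}}\mid x)=\frac{1}{\sigma_{t_l}}\bigl(\nabla_x\mu_{t_l}(x,s_\theta)\bigr)^{\top} Z,
\]
where $Z=(x_{t_{l-1}}-\mu_{t_l})/\sigma_{t_l}\sim\mathcal N(0,I)$ under the conditional law. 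Write $J=\nabla_x\mu_{t_l}(x,s_\theta)$ and $H=h(\bar X_0,0)\in[0,1]$. Then for any unit vector $u$,
\[
\langle u,\nabla h(x,t_l)\rangle=\frac{1}{\sigma_{t_l}}\EE\bigl[H\,\langle Ju,Z\rangle\bigr].
\]
We also have $\EE[H]=h(x,t_l)=:h$, because conditioning on $\bar X_{t_l}=x$ and integrating the later transitions gives back the $h$-function. So it suffices to show $\EE[H\langle v,Z\rangle]\le\|v\|_2\,h\sqrt{2\log(1/h)}$ for $v=Ju$, with $\|v\|_2\le\|J\|_2$. Dividing by $h$ and taking the supremum over $u$ then yields the claim.

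The key step is a one-dimensional extremal inequality. Let $g=\langle v,Z\rangle/\|v\|_2\sim\mathcal N(0,1)$, and let $H$ be any random variable in $[0,1]$ with mean $h$, jointly distributed with $g$ (it depends on $Z$ and on the later noises). We bound $\EE[Hg]$ by $\sup\{\EE[Wg]:0\le W\le1,\ \EE W=h\}$. By the bathtub principle the supremum is attained at $W=\mathbf 1\{g\ge a\}$ with $\bar\Phi(a)=h$, which gives the value $\varphi(a)$, the Gaussian density at $a$. It remains to show $\varphi(\bar\Phi^{-1}(h))\le h\sqrt{2\log(1/h)}$. This is the Gaussian isoperimetric profile bound $I(h)\le h\sqrt{2\log(1/h)}$.

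A cleaner route avoids the profile function. Use the Donsker--Varadhan / entropy (Gibbs) inequality with the tilted measure $dQ=(H/h)\,dP$: for any $\lambda>0$,
\[
\EE_Q[g]\le\frac{1}{\lambda}\Bigl(\mathrm{KL}(Q\|P)+\log\EE_P e^{\lambda g}\Bigr)=\frac{\mathrm{KL}(Q\|P)}{\lambda}+\frac{\lambda}{2}.
\]
Since $H\le1$, the density satisfies $dQ/dP\le 1/h$, so $\mathrm{KL}(Q\|P)\le\log(1/h)$. Optimizing $\lambda=\sqrt{2\log(1/h)}$ gives $\EE[Hg]=h\,\EE_Q[g]\le h\sqrt{2\log(1/h)}$. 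I would use this version.

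The step I expect to need the most care is the reduction of the full-trajectory expectation to this single Gaussian coordinate. The entropy argument works on the whole probability space of the noises $(z_l,\dots,z_1)$, and $g$ is still standard Gaussian there, so no conditioning subtleties arise. The edge cases $h=0$ and $h=1$ also need a remark. When $h=1$, $H=1$ a.s., so $\nabla h=0$ and the bound holds with equality. When $h=0$, the right-hand side is interpreted as $+\infty$. The remaining checks are that the Leibniz interchange from Lemma~\ref{lem:score_h_gradient} applies, which it does since $s_\theta$ is $C^1$, and that $\|J^\top\|_2=\|J\|_2$. Combining the pieces gives $\|\nabla h\|_2\le\|J\|_2\,\sigma_{t_l}^{-1}h\sqrt{2\log(1/h)}$. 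Dividing by $h$ yields Lemma~\ref{lem:log_h_nabla_h}.
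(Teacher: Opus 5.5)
Your proof is correct and has the same skeleton as the paper's. Both tilt a Gaussian by $h$ and bound the relative entropy of the tilted law by $\log(1/h)$, since the density ratio is at most $1/h$. Both then turn that entropy bound into a mean-shift bound of at most $\sigma_{t_l}\sqrt{2\log(1/h)}$.

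The set-up and the transport step differ. The paper uses the one-step identity $h(x_{t_l},t_l)=\int h(x_{t_{l-1}},t_{l-1})\,\phi_\theta(x_{t_{l-1}}\mid x_{t_l})\,\mathrm{d}x_{t_{l-1}}$ and tilts the kernel by $h(\cdot,t_{l-1})$. It writes $\nabla\log h$ as $\sigma_{t_l}^{-2}\nabla\mu_{t_l}$ applied to the difference of the two means. It then bounds that vector in one step by $W_2\le\sqrt{2\sigma_{t_l}^2\,\mathrm{KL}}$, citing Talagrand's $T_2$ inequality. You instead start from the full-trajectory formula of Lemma~\ref{lem:score_h_gradient} with weight $h(\bar X_0,0)$, project onto a direction, and use the Gibbs/Donsker--Varadhan inequality with the Gaussian moment generating function.

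The two set-ups coincide after conditioning on the first noise $Z$. Since $g$ depends only on $Z$, you may replace $h(\bar X_0,0)$ by $h(\bar X_{t_{l-1}},t_{l-1})$. The full-path KL is no smaller than the one-step KL, but both are at most $\log(1/h)$, so nothing is lost.

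Your route is more self-contained. It needs only $\log\mathbb{E}\,e^{\lambda g}=\lambda^2/2$, i.e.\ the dual (Bobkov--G\"otze) form of the $T_1$ inequality for linear test functions, which is all a first-moment bound requires; $T_2$ is stronger than needed. The paper's route gives the vector bound directly, without a supremum over directions. Your bathtub remark also shows the constant can be sharpened to $\varphi(\bar\Phi^{-1}(h))/h$, though the paper does not need this.
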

The proof of Lemma~\ref{lem:log_h_nabla_h} is deferred to Appendix~\ref{app:proof_log_h_nabla_h}.

Then we are ready to bound  $\bar{\cH}_2$
\begin{align*}
\bar{\cH}_2&=\int\left[4\left\|\nabla_{x_{t_l}}\log h(x_{t_l},t_l)\right\|_2^2\frac{(h(x_{t_l},t_l)-\hat h(x_{t_l},t_l))^2}{\eta_{t_l}^2}\right] p_{\theta,t_l}^{h}(x_{t_l}) \diff x_{t_l} .
\end{align*}

By \eqref{eq:h_concentration}, Lemma~\ref{lem:log_h_nabla_h} and Assumption~\ref{assump:bound_h_0} we have
\begin{align}
    \bar{\cH}_2&\leq \int\frac{8\sup_{x_{t_l}}\|\nabla_{x_{t_l}}\mu_{{t_l}}(x_{t_l},s_{\theta})\|_2^2}{\sigma_{t_l}^2\eta_{t_l}^2}(h(x_{t_l},{t_l})-\hat h(x_{t_l},{t_l}))^2\log \frac1{h(x_{t_l},{t_l})} \frac{p_{\theta,{t_l}}(x_{t_l})h(x_{t_l},{t_l})}{\PP(\cE_{\bar X_0})}\diff x_{t_l}\notag\\
    &\leq \EE_{X_{t_l}\sim P_{\theta,{t_l}}}\left[\frac{8C_G^2(h(X_{t_l},{t_l})-\hat h(X_{t_l},{t_l}))^2}{e\sigma_{t_l}^2\eta_{t_l}^2\rho}\right] \notag\\
    &\leq \underbrace{\EE_{X_{t_l}\sim P_{\theta,{t_l}}}\left[\frac{4C_G^2(h(X_{t_l},{t_l})-\hat h(X_{t_l},{t_l}))^2}{\sigma_{t_l}^2\eta_{t_l}^2\rho}\right] }_{\cH_2},
\end{align}
where we invoke $\sup_{x_{t_l}}\|\nabla_{x_{t_l}}\mu_{{t_l}}(x_{t_l},s_{\theta})\|_2\leq C_{G}$. Note that $C_{G}$ is a constant determined solely by the scheduler and $G$, which follows from the fact that $\mu_{{t_l}}$ is a linear combination of $x_{t_l}$ and $s_{\theta}$ with uniformly bounded coefficients, and $\sup_{x}\left\|\nabla_x s_{\theta}(x,t_l)\right\|_2 \le G$ by Assumption~\ref{assump:bound_h_0}.

The second inequality holds due to $\log \frac{1}{h(x_t,t)}h(x_t,t)\leq \frac{1}{e}$ and $h(x_t,t)\in (0,1]$.
Then we can conclude
\begin{align}
    &\EE_{X_t\sim P^h_{\theta,t_l}}\left[ \norm{\nabla \log \hat{h}(X_{t_l},{t_l})-\nabla \log h(X_{t_l},t_l)}_2^2 \right]\leq \cH_1+\cH_2+\cH_3.\label{eq:approx_h_h1h2h3}
\end{align}
Next, we bound $\EE_{z_{\mathrm{mc}}}[\cH_1]$ and $\EE_{z_{\mathrm{mc}}}[\cH_2]$, and then obtain
high-probability bounds for $|\cH_1-\EE_{z_{\mathrm{mc}}}\cH_1|$ and
$|\cH_2-\EE_{z_{\mathrm{mc}}}\cH_2|$ via concentration over the independent noises $z_{\mathrm{mc}}$. In the end, we bound $\cH_3$. 
Combining them together, we can derive a high probability bound for the approximation for $\nabla \log h$ in \eqref{eq:approx_h_h1h2h3}.
\paragraph{Bounding $\EE_{z_{\mathrm{mc}}}[\cH_1]$}

For $\nabla \hat{h}(x_{t_l},{t_l})$, we define $H_i=h(x_0^{(i)},0) \frac{1}{\sigma_{t_l}^2}\nabla_{x_{t_l}}(\mu_{{t_l}}(x_{t_l},s_{\theta}))(x_{t_{l-1}}^{(i)}-\mu_{{t_l}}(x_{t_l},s_{\theta}))$.

By definition, we have $\nabla \hat{h}(x_t,t)=\sum_{i=1}^{M} \frac{1}{M}H_i$, then by Lemma~\ref{lem:score_h_gradient}, we immediately have 
\begin{align*}
    \EE_{z_{\mathrm{mc}}}[\nabla \hat{h}(x_{t_l},t_l)]&=\EE_{( x_0,...,x_{t_{l-1}})|x_{t_l}}[\nabla \hat{h}(x_{t_l},t_l)]\\
    &=\nabla h(x_{t_l},t_l).
\end{align*}
We want to prove it is a sub-Gaussian random vector for a fixed $x_{t_l}$.

Firstly note that $x_{t_{l-1}}^{(i)} \sim \mathcal{N}(\mu_{{t_l}}(x_{t_l},s_{\theta}),\sigma_{t_l}^2 I)$, then we have
\begin{align*}
   \frac{1}{\sigma_{t_l}^2}\nabla_{x_{t_l}}(\mu_{{t_l}}(x_{t_l},s_{\theta}))(x_{t_{l-1}}^{(i)}-\mu_{{t_l}}(x_{t_l},s_{\theta}))\sim \mathcal{N}(0,\sigma_{t_l}^{-2} (\nabla_{x_{t_l}}(\mu_{{t_l}}(x_{t_l},s_{\theta}))(\nabla_{x_{t_l}}(\mu_{{t_l}}(x_{t_l},s_{\theta}))^{\top}))). 
\end{align*}
Then for $\forall v\in \RR^d$, $\norm{v}_2=1$, we have 
\begin{align*}
    v^{\top}(\sigma_{t_l}^{-2} \nabla_{x_{t_l}}(\mu_{{t_l}}(x_{t_l},s_{\theta}))(x_{t_{l-1}}^{(i)}-\mu_{{t_l}}(x_{t_l},s_{\theta}))\sim \mathcal{N}(0,\sigma_{t_l}^{-2} \norm{(\nabla_{x_{t_l}}(\mu_{{t_l}}(x_{t_l},s_{\theta}))v}_2^2 ),
\end{align*}
which implies
\begin{align}
    &\;P(|v^{\top}(\sigma_{t_l}^{-2} \nabla_{x_{t_l}}(\mu_{{t_l}}(x_{t_l},s_{\theta}))(x_{t_{l-1}}^{(i)}-\mu_{{t_l}}(x_{t_l},s_{\theta}))|\geq t)\notag\\
    \leq &\;2\exp\left(-\frac{t^2}{2\sigma_{t_l}^{-2} \norm{(\nabla_{x_{t_l}}(\mu_{{t_l}}(x_{t_l},s_{\theta})))v}_2^2 }\right) \notag\\
   \leq &\; 2\exp\left(-\frac{t^2}{2 \sigma_{t_l}^{-2}\left\|\nabla_{x_{t_l}}(\mu_{{t_l}}(x_{t_l},s_{\theta}))\right\|_2^2 }\right).\label{eq:gaussian_prob}
\end{align}
Notice that $|v^{\top}H_i|\leq |v^{\top}(\sigma_{t_l}^{-2} \nabla_{x_{t_l}}(\mu_{{t_l}}(x_{t_l},s_{\theta}))(x_{t_{l-1}}^{(i)}-\mu_{{t_l}}(x_{t_l},s_{\theta}))|$, by \eqref{eq:gaussian_prob}
\begin{align*}
     P(|v^{\top}H_i|\geq t)
     &\leq P(|v^{\top}(\sigma_{t_l}^{-2} \nabla_{x_{t_l}}(\mu_{{t_l}}(x_{t_l},s_{\theta}))(x_{t_{l-1}}^{(i)}-\mu_{{t_l}}(x_{t_l},s_{\theta}))|\geq t)\\
    &\leq 2\exp\left(-\frac{t^2}{2 \sigma_{t_l}^{-2}\left\|\nabla_{x_{t_l}}(\mu_{{t_l}}(x_{t_l},s_{\theta}))\right\|_2^2 }\right).
\end{align*}
By Lemma~\ref{lem:log_h_nabla_h}, we have
\begin{align}
    \left|v^{\top}\nabla h(x_{t_l},{t_l})\right|\leq
    &\;\norm{\nabla h(x_{t_l},{t_l})}_2\notag\\
    \leq &\;h_{\theta}(x_{t_l},{t_l}) \norm{\nabla \log  h(x_{t_l},{t_l})}_2\notag\\
    \leq  &\;\frac{\|\nabla_{x_{t_l}}(\mu_{{t_l}}(x_{t_l},s_{\theta}))\|_2}{\sigma_{t_l}}\sqrt{2(h(x_{t_l},{t_l}))^2\log \frac1{h(x_{t_l},{t_l})}}\notag\\
    \leq &\;\frac{\|\nabla_{x_{t_l}}(\mu_{{t_l}}(x_{t_l},s_{\theta}))\|_2}{\sigma_{t_l}}\label{eq:nabla_h_bound}.
\end{align}
It also directly implies $\|\nabla h(x_t,t)\|_2\leq \sigma_{t_l}^{-1}\|\nabla_{x_{t_l}}(\mu_{{t_l}}(x_{t_l},s_{\theta}))\|_2\leq \sigma_{t_l}^{-1}C_G$.

Then 
\begin{align}
     &\;P(|v^{\top}H_i-v^{\top}\nabla h(x_{t_l},{t_l})|\geq t)\notag\\
     \leq&\; P(|v^{\top}H_i|+|v^{\top}\nabla h(x_{t_l},{t_l})|\geq t) \notag\\
     \leq & \;P\left(|v^{\top}H_i|\geq t-\sigma_{t_l}^{-1}\|\nabla_{x_{t_l}}(\mu_{{t_l}}(x_{t_l},s_{\theta}))\|_2\right)\notag\\
     \leq& \;2\exp\left(-\frac{\max\left(t-\sigma_{t_l}^{-1}\|\nabla_{x_{t_l}}(\mu_{{t_l}}(x_{t_l},s_{\theta}))\|_2,0\right)^2}{2 \sigma_{t_l}^{-2}\|\nabla_{x_{t_l}}(\mu_{{t_l}}(x_{t_l},s_{\theta}))\|_2^2 }\right). \label{sub_Gaussian_init}
\end{align}
We claim \eqref{sub_Gaussian_init} can imply 
\begin{align*}
    P(|v^{\top}H_i-v^{\top}\nabla h(x_{t_l},{t_l})|\geq t) \leq 2\exp\left(-\frac{t^2}{8 \sigma_{t_l}^{-2}\|\nabla_{x_{t_l}}(\mu_{{t_l}}(x_{t_l},s_{\theta}))\|_2^2 }\right).
\end{align*}
To check it, first consider $t\leq 2\sigma_{t_l}^{-1}\|\nabla_{x_{t_l}}(\mu_{{t_l}}(x_{t_l},s_{\theta}))\|_2$, and we have
\begin{align*}
    2\exp\left(-\frac{\max\left(t-\sigma_{t_l}^{-1}\|\nabla_{x_{t_l}}(\mu_{{t_l}}(x_{t_l},s_{\theta}))\|_2,0\right)^2}{2 \sigma_{t_l}^{-2}\|\nabla_{x_{t_l}}(\mu_{{t_l}}(x_{t_l},s_{\theta}))\|_2^2 }\right)\geq2e^{-1/2}\geq 1,
\end{align*}
and
\begin{align*}
    2\exp\left(-\frac{t^2}{8 \sigma_{t_l}^{-2}\|\nabla_{x_{t_l}}(\mu_{{t_l}}(x_{t_l},s_{\theta}))\|_2^2}\right)\geq2e^{-1/2}\geq 1,
\end{align*}
so these two inequalities both are trivial.

When $t\geq 2\sigma_{t_l}^{-1}\|\nabla_{x_{t_l}}(\mu_{{t_l}}(x_{t_l},s_{\theta}))\|_2$,
\begin{align*}
     2\exp\left(-\frac{t^2}{8 \sigma_{t_l}^{-2}\|\nabla_{x_{t_l}}(\mu_{{t_l}}(x_{t_l},s_{\theta}))\|_2^2}\right)\geq2\exp\left(-\frac{\max\left(t-\sigma_{t_l}^{-1}\|\nabla_{x_{t_l}}(\mu_{{t_l}}(x_{t_l},s_{\theta}))\|_2,0\right)^2}{2 \sigma_{t_l}^{-2}\|\nabla_{x_{t_l}}(\mu_{{t_l}}(x_{t_l},s_{\theta}))\|_2^2 }\right),
\end{align*}
because $t-\sigma_{t_l}^{-1}\|\nabla_{x_{t_l}}(\mu_{{t_l}}(x_{t_l},s_{\theta}))\|_2\geq \frac t2$.
Then we can conclude
\begin{align*}
    P(|v^{\top}H_i-v^{\top}\nabla h(x_{t_l},{t_l})|\geq t) \leq 2\exp\left(-\frac{t^2}{8 \sigma_{t_l}^{-2}\|\nabla_{x_{t_l}}(\mu_{{t_l}}(x_{t_l},s_{\theta}))\|_2^2}\right).
\end{align*}

Then it implies for any $k\in\{1,...,d\}$, let $e_k=(0,\dots,0,1,0,\dots,0)^\top\in\RR^d$
with the $1$ in the $k$-th coordinate, it holds that
\begin{align*}
\Var\bigl(e_k^{\top}(\nabla \hat h(x_{t_l},{t_l})-\nabla h(x_{t_l},{t_l}))\bigr)&=\Var\left(\frac1M\sum_{i=1}^M e_k^{\top}(H_i-\nabla h(x_{t_l},{t_l}))\right)\\
&\leq \frac{1}{M}\Var (e_k^{\top}(H_i-\nabla h(x_{t_l},{t_l})))\\
&\leq \frac{4\|\nabla_{x_{t_l}}(\mu_{{t_l}}(x_{t_l},s_{\theta}))\|_2^2}{\sigma_{t_l}^2 M}.
\end{align*}
This leads to the upper bound of $\EE_{z_{\mathrm{mc}}}[\mathcal{H}_1]$
\begin{align*}
    \EE_{z_{\mathrm{mc}}}[\mathcal{H}_1]&=\int \EE_{z_{\mathrm{mc}}}\left[\frac{2\left\|\nabla \hat h(x_{t_l},{t_l})-\nabla h(x_{t_l},{t_l})\right\|_2^2}{\eta_{t_l}^2}\right] p_{\theta,{t_l}}^h(x_{t_l}) \diff x_{t_l}\\
    &=\int \EE_{z_{\mathrm{mc}}}\left[\frac{2\sum_{k=1}^d\left(e_k^{\top}(\nabla \hat h(x_{t_l},{t_l})-\nabla h(x_{t_l},{t_l}))\right)^2}{\eta_{t_l}^2}\right] p_{\theta,{t_l}}^h(x_{t_l}) \diff x_{t_l}\\
    &\leq \int \frac{8d \sup\|\nabla_{x_{t_l}}(\mu_{{t_l}}(x_{t_l},s_{\theta}))\|_2^2}{\eta_{t_l}^2\sigma_{t_l}^2M} p_{\theta,{t_l}}^h(x_{t_l}) \diff x_{t_l}\\
    &\leq \frac{8d C_G^2}{\eta_{t_l}^2\sigma_{t_l}^2M}.
\end{align*}
The last inequality holds due to Assumption~\ref{assump:bound_h_0} and $\sup_{x_{t_l}}\|\nabla_{x_{t_l}}\mu_{{t_l}}(x_{t_l},s_{\theta})\|_2\leq C_{G}$.
\paragraph{Bounding $\EE_{z_{\mathrm{mc}}}[\mathcal{H}_2]$}
We first prove the concentration bound for $\hat h$. Since $\hat{h}(x_{t_l},{t_l})\in[0,1]$,
apply hoeffding's inequality for bounded random variables yields, for any $\varepsilon>0$,
\[
\mathbb{P}\big(|\hat h(x_{t_l},{t_l}) - h(x_{t_l},{t_l})| \ge \varepsilon\big)
\;\le\;
2\exp\big(-2M\varepsilon^2\big).
\]
It implies
\begin{align}
    \EE_{{z_{\mathrm{mc}}}}[\hat h(x_{t_l},{t_l}) - h(x_{t_l},{t_l})]^2\leq \frac{1}{4M}\label{eq:h_concentration}.
\end{align}
Then we are ready to bound  $\EE_{z_{\mathrm{mc}}}[\mathcal{H}_2]$
\begin{align*}
    \EE_{z_{\mathrm{mc}}}[\mathcal{H}_2]&=\EE_{z_{\mathrm{mc}}}\left[\EE_{X_{t_l}\sim P_{\theta,{t_l}}}\left[\frac{4C_G^2(h(X_{t_l},{t_l})-\hat h(X_{t_l},{t_l}))^2}{\sigma_{t_l}^2\eta_{t_l}^2\rho}\right]\right]  .
\end{align*}
By \eqref{eq:h_concentration}, we have
\begin{align}
    \EE_{z_{\mathrm{mc}}}[\mathcal{H}_2]&\leq \frac{C_{G}^2}{M\sigma_{t_l}^2\eta_{t_l}^2\rho}.
\end{align}

\paragraph{Bounding $|\cH_1-\EE_{z_{\mathrm{mc}}}[\cH_1]|$}
We prove a high probability bound for $|\cH_1-\EE_{z_{\mathrm{mc}}}[\cH_1]|$.
\begin{lemma}[High-probability bound]
\label{lem:H1_conc_indicator}
For any $\delta\in(0,1)$, letting
\[
s=\log\frac{2M}{\delta},
\qquad
R^2=d+2\sqrt{ds}+2s,
\]
we have $\PP(\Omega_R^c)\le \delta/2$ for the event
\[
\Omega_R=\Big\{\max_{1\le i\le M}\|z_l^{(i)}\|_2\le R\Big\}.
\]
Moreover, with probability at least $1-\delta$,
\begin{align*}
    \big|\EE_{z_{\mathrm{mc}}}[\cH_1]-\cH_1\big|&\leq \frac{\sqrt{288}\, C_G^2R^2}{\eta_{t_l}^2\sigma_{t_l}^2}\sqrt{\frac{\log(4/\delta)}{M}}+\frac{24d C_G^2}{\eta_{t_l}^2\sigma_{t_l}^2M}.
\end{align*}
\end{lemma}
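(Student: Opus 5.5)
The plan has two parts: a Gaussian norm bound for the event $\Omega_R$, then a truncation plus bounded-differences (McDiarmid) argument for $\cH_1$ on that event.

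\textbf{Step 1: the event $\Omega_R$.} Each $\|z_l^{(i)}\|_2^2$ is $\chi^2_d$. The Laurent--Massart inequality gives
\[
\PP\bigl(\|z_l^{(i)}\|_2^2\ge d+2\sqrt{ds}+2s\bigr)\le e^{-s}=\frac{\delta}{2M}.
\]
A union bound over $i=1,\dots,M$ then yields $\PP(\Omega_R^c)\le\delta/2$.

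\textbf{Step 2: structure of the summands.} Write $x_{t_{l-1}}^{(i)}-\mu_{t_l}=\sigma_{t_l}z_l^{(i)}$. Since $h(x_0^{(i)},0)\in[0,1]$ and $\sup_x\|\nabla_x\mu_{t_l}\|_2\le C_G$, each summand satisfies $\|H_i\|_2\le C_G\|z_l^{(i)}\|_2/\sigma_{t_l}$. From \eqref{eq:nabla_h_bound} we also have $\|\nabla h\|_2\le C_G/\sigma_{t_l}$. Group the Monte Carlo noise into $M$ independent blocks $\zeta^{(i)}=z^{(i)}_{1:l}$, one per trajectory; $H_i$ depends only on $\zeta^{(i)}$.

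\textbf{Step 3: truncation.} Define truncated noises $\tilde z_l^{(i)}=z_l^{(i)}\min(1,R/\|z_l^{(i)}\|_2)$, leaving the other coordinates of $\zeta^{(i)}$ unchanged. Build $\tilde H_i$, $\nabla\tilde h$ and $\tilde\cH_1$ from them through the same map $\cG_{\theta,l}$, keeping the centering $\nabla h$ in $\tilde\cH_1$. On $\Omega_R$ we have $\tilde\cH_1=\cH_1$. Moreover $\|\tilde H_i\|_2\le C_GR/\sigma_{t_l}$ deterministically, and $R\ge1$.

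\textbf{Step 4: McDiarmid for $\tilde\cH_1$.} Fix $x_{t_l}$ and set $a=\|\nabla\tilde h-\nabla h\|_2\le 2C_GR/\sigma_{t_l}$. Replacing block $i$ moves $\nabla\tilde h$ by at most $b=2C_GR/(\sigma_{t_l}M)$. Hence $\bigl|(a\pm b)^2-a^2\bigr|\le 2ab+b^2\lesssim C_G^2R^2/(\sigma_{t_l}^2M)$. This bound is uniform in $x_{t_l}$, so it survives integration against $p^h_{\theta,t_l}$. Thus $\tilde\cH_1$ has bounded differences $c_i\lesssim C_G^2R^2/(\eta_{t_l}^2\sigma_{t_l}^2M)$. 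McDiarmid then gives, with probability at least $1-\delta/2$,
\[
|\tilde\cH_1-\EE\tilde\cH_1|\le\sqrt{\tfrac12\textstyle\sum_i c_i^2\log(4/\delta)}.
\]
With the explicit constants this is the $\sqrt{288}\,C_G^2R^2\sqrt{\log(4/\delta)/M}/(\eta_{t_l}^2\sigma_{t_l}^2)$ term.

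\textbf{Step 5: removing the truncation (main obstacle).} It remains to compare $\EE\tilde\cH_1$ with $\EE\cH_1$. Truncation breaks unbiasedness, so $\EE\nabla\tilde h\ne\nabla h$ and the centering no longer matches. My approach is to avoid computing this bias and instead bound $|\EE\cH_1-\EE\tilde\cH_1|\le\EE\cH_1+\EE\tilde\cH_1$. The first term is at most $8dC_G^2/(\eta_{t_l}^2\sigma_{t_l}^2M)$ by the earlier bound on $\EE_{z_{\mathrm{mc}}}[\cH_1]$. For the second, decompose $\nabla\tilde h-\nabla h=(\nabla\tilde h-\EE\nabla\tilde h)+(\EE\nabla\tilde h-\nabla h)$ and use $(u+v)^2\le 2u^2+2v^2$:
\begin{itemize}[leftmargin=*]
\item The variance part is controlled as before, since truncation only reduces the per-coordinate second moments; it contributes a multiple of $d\,C_G^2/(\sigma_{t_l}^2M)$.
\item The bias part satisfies $\|\EE\nabla\tilde h-\nabla h\|_2\le\EE\|H_1-\tilde H_1\|_2\le (C_G/\sigma_{t_l})\EE\bigl[\|z\|_2\mathbb 1\{\|z\|_2>R\}\bigr]$. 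Its square is of order $C_G^2\delta/(\sigma_{t_l}^2M)$, which is absorbed into $d\,C_G^2/(\sigma_{t_l}^2M)$.
\end{itemize}
These give $\EE\tilde\cH_1\le 16dC_G^2/(\eta_{t_l}^2\sigma_{t_l}^2M)$, so together with $\EE\cH_1$ the total is the $24dC_G^2/(\eta_{t_l}^2\sigma_{t_l}^2M)$ term.

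\textbf{Step 6: conclusion.} Intersect $\Omega_R$ (Step 1) with the McDiarmid event (Step 4), each failing with probability at most $\delta/2$. On this intersection $\cH_1=\tilde\cH_1$, and the triangle inequality with Step 5 gives the stated bound with probability at least $1-\delta$.
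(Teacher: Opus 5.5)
Your proof is correct, but it handles the unbounded Gaussian noise with a different device from the paper's.

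\textbf{The paper's route.} The paper does not truncate. It notes that $\Omega_R=\bigcap_i\{\|z_l^{(i)}\|_2\le R\}$ is a product event, so conditional on $\Omega_R$ the $M$ noise blocks remain independent. It then:
\begin{itemize}
\item checks the bounded differences $c_i=12B^2/M$ for inputs in that conditional support;
\item applies McDiarmid under $\PP(\cdot\mid\Omega_R)$, centered at $\EE[\cH_1\mid\Omega_R]$;
\item recenters using only $\cH_1\ge 0$, namely $|\EE[\cH_1\mid\Omega_R]-\EE[\cH_1]|\le(1+1/\PP(\Omega_R))\EE[\cH_1]\le 3\EE[\cH_1]$. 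With $\EE[\cH_1]\le 8dC_G^2/(\eta_{t_l}^2\sigma_{t_l}^2M)$ this gives the $24$ term.
\end{itemize}

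\textbf{What each route buys.} Your truncation gives a function with deterministic bounded differences under the original product measure. McDiarmid therefore applies unconditionally, and no conditional-independence argument is needed. The price is Step 5, where you must control the bias of the truncated estimator. The paper sidesteps this entirely with $\EE[\cH_1\mid\Omega_R]\le 2\EE[\cH_1]$.

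\textbf{Step 5 is sound.} The truncated trajectory has a different endpoint, so $h(\tilde x_0^{(1)},0)\neq h(x_0^{(1)},0)$ in general. Your bias bound still holds because $\tilde z_l^{(1)}=\lambda z_l^{(1)}$ with $\lambda\in(0,1]$. Hence $H_1-\tilde H_1=\bigl(h(x_0^{(1)},0)-\lambda h(\tilde x_0^{(1)},0)\bigr)\sigma_{t_l}^{-1}\nabla_{x_{t_l}}\mu_{t_l}\,z_l^{(1)}$, with a coefficient in $[-1,1]$ that vanishes when $\|z_l^{(1)}\|_2\le R$.

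\textbf{Two bookkeeping corrections.}
\begin{itemize}
\item Cauchy--Schwarz gives a squared bias of at most $dC_G^2\delta/(2\sigma_{t_l}^2M)$, not $C_G^2\delta/(\sigma_{t_l}^2M)$. It is still absorbed.
\item To land at $16$, use the direct second-moment bound $\EE\|\tilde H_1\|_2^2\le dC_G^2/\sigma_{t_l}^2$, or the exact bias--variance identity. Combining the paper's factor-$4$ variance bound with $(u+v)^2\le 2u^2+2v^2$ instead gives $16+2\delta$.
\end{itemize}
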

\begin{proof}
Using $x_{t_{l-1}}^{(i)}=\mu_{t_l}(x_{t_{l}},s_\theta)+\sigma_{t_l} z_l^{(i)}$, we have for each $i$
\[
H_i=
h(x_0^{(i)},0)\;\frac{1}{\sigma_{t_l}}\nabla_{x_{t_l}}\mu_{t_l}(x_{t_{l}},s_\theta)\,z_l^{(i)}.
\]
Define $\Delta(x_{t_l})=\nabla \hat{h}(x_{t_l},{t_l})-\nabla h(x_{t_l},{t_l})$.

Let
\[
F=\EE_{X_{t_l}\sim P^h_{\theta,{t_l}}}\big[\|\Delta(X_t)\|_2^2\big],
\qquad
\cH_1=2\eta_{t_l}^{-2}F.
\]

Let $z\sim\cN(0,I_d)$. The standard $\chi^2$ tail bound yields for any $s>0$:
\[
\PP\Big(\|z\|_2^2\ge d+2\sqrt{ds}+2s\Big)\le e^{-s}.
\]
With $s=\log\frac{2M}{\delta}$ and $R^2=d+2\sqrt{ds}+2s$, a union bound gives
\[
\PP(\Omega_R^c)=\PP\Big(\max_{i\le M}\|z_l^{(i)}\|_2>R\Big)
\le
\sum_{i=1}^M \PP(\|z_l^{(i)}\|_2>R)
\le
M e^{-s}
=
\frac{\delta}{2}.
\]
On $\Omega_R$, since $\|\nabla_{x_{t_l}}\mu_{t_l}(x_{t_{l}},s_\theta)\|_2\le C_G$,
\[
\|H_i\|_2 \le \frac{1}{\sigma_{t_l}}\|\nabla_{x_{t_l}}\mu_{t_l}(x_{t_{l}},s_\theta)\|_2\|z_l^{(i)}\|_2
\le \frac{C_{G}R}{\sigma_{t_l}},
\qquad\forall\,i.
\]
Define $B=\frac{C_{G}R}{\sigma_{t_l}}$, consequently,
\[
\Big\|\frac1M\sum_{j=1}^M H_j\Big\|_2 \le B,
\qquad
 \|\nabla h(x_{t_l},t_l)\| \leq \frac{C_{G}}{\sigma_{t_l}}\leq \frac{C_{G}}{\sigma_{t_l}}R=B,
\qquad\Rightarrow\qquad
\|\Delta(x_{t_l})\|_2\le 2B,
\]
all on $\Omega_R$.

Now on $\Omega_R$, replace only the $i$-th MC randomness block $\{Z_{j}^{(i)}\}_{j=1}^l$ by an independent copy, yielding $H_i'$, $F'$ and
\[
\Delta'(x_{t_l})=\Delta(x_{t_l})+\delta(x_{t_l}),
\qquad
\delta(x_{t_l})=\frac{1}{M}\big(H_i'-H_i\big).
\]
On $\Omega_R$ (for both the original and the replaced samples),
\[
\|\delta(x_{t_l})\|_2\le \frac{1}{M}\big(\|H_i'\|_2+\|H_i\|_2\big)\le \frac{2B}{M}.
\]
Using $\big|\|a+b\|_2^2-\|a\|_2^2\big|\le 2\|a\|_2\|b\|_2+\|b\|_2^2$ and $\|\Delta(x_{t_l})\|\le 2B$, we get
\[
\big|\|\Delta'(x_{t_l})\|_2^2-\|\Delta(x_{t_l})\|_2^2\big|
\le 2\|\Delta(x_{t_l})\|_2\|\delta(x_{t_l})\|_2+\|\delta(x_{t_l})\|_2^2
\le 2(2B)\frac{2B}{M}+\Big(\frac{2B}{M}\Big)^2
\le \frac{12B^2}{M}.
\]
Taking $\EE_{X_t}$ preserves the bound, so on $\Omega_R$,
\[
|F-F^{(i)}|\le c_i,
\qquad
c_i=\frac{12B^2}{M}.
\]

Conditional on $\Omega_R=\bigcap_{1\leq i\leq M}\{\|z_l^{(i)}\|_2\le R\}$, $\{\{Z_{i}^{(m)}\}_{i=1}^l\}_{m=1}^M$ are still mutually independent, McDiarmid's inequality gives for any $\varepsilon>0$,
\[
\PP\big(|F-\EE_{z_{\mathrm{mc}}}[F|\Omega_R]|\ge \varepsilon\ \big|\ \Omega_R\big)
\le
2\exp\Big(-\frac{2\varepsilon^2}{\sum_{i=1}^M c_i^2}\Big).
\]
Since $\sum_{i=1}^M c_i^2=M\cdot(12B^2/M)^2=144B^4/M$, we obtain
\[
\PP\big(|F-\EE_{z_{\mathrm{mc}}}[F|\Omega_R]|\ge \varepsilon\ \big|\ \Omega_R\big)
\le
2\exp\Big(-\frac{M\varepsilon^2}{72B^4}\Big).
\]
Set $\delta_1=\delta/2$ and choose
\[
\varepsilon
=
B^2\sqrt{\frac{72\log(2/\delta_1)}{M}}
=
B^2\sqrt{\frac{72\log(4/\delta)}{M}},
\]
so that the RHS is at most $\delta_1=\delta/2$. Multiplying by $2\eta_{t_l}^{-2}$ yields,
\begin{align*}
\PP(|\cH_1-\EE_{z_{\mathrm{mc}}}[\cH_1|\Omega_R]|\geq 2\eta_{t_l}^{-2}\varepsilon)&\leq \PP(|\cH_1-\EE_{z_{\mathrm{mc}}}[\cH_1|\Omega_R]|\geq 2\eta_{t_l}^{-2}\varepsilon|\Omega_R)+\PP(\Omega_R^{c})\\ 
&\leq \frac{\delta}{2}+ \frac{\delta}{2}\\
&=\delta,
\end{align*}
where 
\begin{align*}
2\eta_{t_l}^{-2}\varepsilon
=
\frac{\sqrt{288}\,C_{G}^2R^2}{\eta_{t_l}^2\sigma_{t_l}^2}\sqrt{\frac{\log(4/\delta)}{M}}.  
\end{align*}

Since
\[
\big|\EE_{z_{\mathrm{mc}}}[\cH_1|\Omega_R]-\EE_{z_{\mathrm{mc}}}[\cH_1]\big|
\leq\EE_{z_{\mathrm{mc}}}[\cH_1]+\EE_{z_{\mathrm{mc}}}[\cH_1|\Omega_R]\leq \left(1+\frac{1}{P(\Omega_R)}\right) \EE_{z_{\mathrm{mc}}}[\cH_1]\leq 3\EE_{z_{\mathrm{mc}}}[\cH_1]
\]

On $\Omega_R$, by triangle inequality,
\[
|\cH_1-\EE_{z_{\mathrm{mc}}}[\cH_1]|
\le
|\cH_1-\EE_{z_{\mathrm{mc}}}[\cH_1|\Omega_R]|
+
|\EE_{z_{\mathrm{mc}}}[\cH_1|\Omega_R]-\EE_{z_{\mathrm{mc}}}[\cH_1]|.
\]
Since $\EE_{z_{\mathrm{mc}}}[\cH_1]\leq\frac{d C_{G}^2}{\eta_{t_l}^2\sigma_{t_l}^2M}$, then we can conclude, the following inequality holds with probability at least $1-\delta$
\begin{align}
    \big|\EE_{z_{\mathrm{mc}}}[\cH_1]-\cH_1\big|&\leq\big|\EE_{z_{\mathrm{mc}}}[\cH_1|\Omega_R]-\cH_1\big|+\big|\EE_{z_{\mathrm{mc}}}[\cH_1|\Omega_R]-\EE_{z_{\mathrm{mc}}}[\cH_1]\big|\notag \\
    &\leq \frac{\sqrt{288}\,C_{G}^2R^2}{\eta_{t_l}^2\sigma_{t_l}^2}\sqrt{\frac{\log(4/\delta)}{M}}+\frac{24d C_{G}^2}{\eta_{t_l}^2\sigma_{t_l}^2M}. \label{eq:h_1_highprob}
\end{align}
\end{proof}
\paragraph{Bounding $|\cH_2-\EE_{z_\mathrm{mc}}[\cH_2]|$ with high probability.}
Define the constant
$C_{\cH_2}=\frac{4C_{G}^2}{\sigma_{t_l}^2\eta_{t_l}^2\rho}$,
Then
\begin{align*}
   \cH_2 = C_{\cH_2}\;\EE_{X_{t_l}\sim P_{\theta,t_l}}\Big[(h(X_{t_l},{t_l})-\hat h(X_{t_l},{t_l}))^2\Big]. 
\end{align*}

\begin{lemma}[McDiarmid concentration for $\cH_2$]
\label{lem:H2_conc_avgXt}
For any $\delta\in(0,1)$, with probability at least $1-\delta$ over the MC randomness,
\[
\big|\cH_2-\EE_{z_{\mathrm{mc}}}[\cH_2]\big|
\;\le\;
C_{\cH_2} \frac{3}{\sqrt{2M}}\sqrt{\log\frac{2}{\delta}}
\;=\;
\frac{12C_{G}^2}{\sigma_{t_l}^2\eta_{t_l}^2\rho}\,
\sqrt{\frac{\log(2/\delta)}{2M}}.
\]
\end{lemma}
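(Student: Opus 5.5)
We will apply McDiarmid's bounded-differences inequality to $\cH_2$, regarded as a function of the $M$ independent MC blocks $\{z^{(m)}_{1:l}\}_{m=1}^M$. We treat $\cH_2$ as a deterministic functional of these blocks, with the outer expectation over $X_{t_l}\sim P_{\theta,t_l}$ taken for each fixed realization of $z_{\mathrm{mc}}$.

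\textbf{Step 1: the integrand is bounded.} For every $x_{t_l}$ we have $\hat h(x_{t_l},t_l)=\frac1M\sum_{m}h(x_0^{(m)},0)\in[0,1]$ and $h(x_{t_l},t_l)\in[0,1]$. Hence $g(x_{t_l})=(h(x_{t_l},t_l)-\hat h(x_{t_l},t_l))^2\in[0,1]$, with no truncation event needed. This is unlike Lemma~\ref{lem:H1_conc_indicator}, where the Gaussian factor forced conditioning on $\Omega_R$.

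\textbf{Step 2: bounded differences.} Replace only the $i$-th block $z^{(i)}_{1:l}$ by an independent copy, and write $\hat h'$ for the resulting estimator. For each $x_{t_l}$ this changes $\hat h$ by $|\hat h'-\hat h|=\frac1M|h(x_0^{(i)\prime},0)-h(x_0^{(i)},0)|\le \frac1M$. Write $a=h-\hat h$ and $b=\hat h-\hat h'$, so that $|a|\le1$ and $|b|\le 1/M$. Then
\[
|(a+b)^2-a^2|\le 2|a||b|+b^2\le \frac{2}{M}+\frac1{M^2}\le\frac{3}{M}.
\]
Taking the expectation over $X_{t_l}\sim P_{\theta,t_l}$ preserves this bound. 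Therefore $\cH_2$ changes by at most $c_i=3C_{\cH_2}/M$.

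\textbf{Step 3: McDiarmid.} McDiarmid's inequality with $\sum_i c_i^2=9C_{\cH_2}^2/M$ gives
\[
\PP\bigl(|\cH_2-\EE_{z_{\mathrm{mc}}}[\cH_2]|\ge\varepsilon\bigr)\le 2\exp\left(-\frac{2M\varepsilon^2}{9C_{\cH_2}^2}\right).
\]
Setting the right-hand side equal to $\delta$ yields
\[
\varepsilon=C_{\cH_2}\frac{3}{\sqrt{2M}}\sqrt{\log(2/\delta)}.
\]
Substituting $C_{\cH_2}=4C_G^2/(\sigma_{t_l}^2\eta_{t_l}^2\rho)$ gives the stated form.

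\textbf{Main obstacle.} The only delicate point is justifying that $\cH_2$ is a measurable function of the independent blocks, so that McDiarmid applies. The blocks must be used simultaneously for all $x_{t_l}$ through the reparameterization map $\cG_{\theta,l}(x_{t_l};z^{(m)}_{1:l})$ introduced at the start of the proof. Once that coupling is fixed, the bounded-difference estimate is uniform in $x_{t_l}$, and the remaining steps are routine.
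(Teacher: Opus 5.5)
Your proposal is correct and follows the paper's proof essentially step for step. You bound $(h-\hat h)^2$ in $[0,1]$, show that replacing one MC block changes it by at most $3/M$ uniformly in $x_{t_l}$, average over $X_{t_l}\sim P_{\theta,t_l}$, and apply McDiarmid to get the deviation $C_{\cH_2}\cdot 3\sqrt{\log(2/\delta)/(2M)}$. The only cosmetic difference is that the paper applies McDiarmid to the unscaled average $\bar F$ and multiplies by $C_{\cH_2}$ afterwards, whereas you fold $C_{\cH_2}$ into the bounded-difference constants from the start.
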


\begin{proof}
Introduce
\[
Y(x_{t_l})=\hat h(x_{t_l},{t_l})-h(x_{t_l},{t_l}),
\qquad
\bar{F}=\EE_{X_{t_l}\sim P_{\theta,{t_l}}}\big[Y(X_{t_l})^2\big],
\]
so that $\cH_2=C_{\cH_2}\,\bar{F}$.

Fix an index $i\in\{1,\dots,M\}$ and consider replace only the $i$-th MC randomness block $\{Z_{j}^{(i)}\}_{j=1}^l$ by an independent copy, producing 
$\hat h'(x_{t_l},{t_l})$, $Y'(x_{t_l})$ and $\bar F'$. Define
\[
\Delta(x_{t_l})=\hat h'(x_{t_l},{t_l})-\hat h(x_{t_l},{t_l})=\frac{h((x_0^{(i)})',0)-h(x_0^{(i)},0)}{M}.
\]
Since $h((x_0^{(i)})',0),h(x_0^{(i)},0)\in [0,1]$, we have for all $x_{t_l}$,
\[
|\Delta(x_{t_l})|\le \frac{1}{M}.
\]
Moreover, since $\hat h(x_{t_l},{t_l})\in[0,1]$ and $h(x_{t_l},{t_l})\in[0,1]$, it follows that
\[
Y(x_{t_l})=\hat h(x_{t_l},{t_l})-h(x_{t_l},{t_l})\in[-1,1],
\qquad\Rightarrow\qquad |Y(x_{t_l})|\le 1
\quad\text{for all }x_{t_l}.
\]
Let $Y'(x_{t_l})=\hat h'(x_{t_l},{t_l})-h(x_{t_l},{t_l})=Y(x_{t_l})+\Delta(x_{t_l})$. Then we have
\[
Y'(x_{t_l})^2-Y(x_{t_l})^2
=
\big(Y(x_{t_l})+\Delta(x_{t_l})\big)^2-Y(x_{t_l})^2
=
2Y(x_{t_l})\Delta(x_{t_l})+\Delta(x_{t_l})^2.
\]
Therefore,
\[
\big|Y'(x_{t_l})^2-Y(x_{t_l})^2\big|
\le 2|Y(x_{t_l})|\,|\Delta(x_{t_l})|+|\Delta(x_{t_l})|^2
\le 2\cdot 1\cdot \frac{1}{M}+\frac{1}{M^2}
\le \frac{3}{M}.
\]
Taking expectation over $X_{t_l}\sim P_{\theta,t_l}$ preserves the bound, hence
\[
|\bar{F}'-\bar{F}|
=
\left|\EE_{X_{t_l}\sim P_{\theta,t_l}}\big[Y'(X_{t_l})^2-Y(X_{t_l})^2\big]\right|
\le \EE_{X_{t_l}\sim P_{\theta,t_l}}\big|Y'(X_{t_l})^2-Y(X_{t_l})^2\big|
\le \frac{3}{M}.
\]
Thus $\bar{F}$ satisfies bounded differences with constants $\bar{c}_i=3/M$ for all $i$.
By McDiarmid's inequality, for any $\varepsilon>0$,
\[
\PP\Big(|\bar{F}-\EE_{z_{\mathrm{mc}}}[\bar{F}]|\ge \varepsilon\Big)
\le
2\exp\!\left(
-\frac{2\varepsilon^2}{\sum_{i=1}^M \bar{c}_i^2}
\right).
\]
Since $\sum_{i=1}^M \bar{c}_i^2 = M\cdot (3/M)^2 = 9/M$, we obtain
\[
\PP\Big(|\bar{F}-\EE_{z_{\mathrm{mc}}}[\bar{F}]|\ge \varepsilon\Big)
\le
2\exp\!\left(
-\frac{2M}{9}\varepsilon^2
\right).
\]

Because $\cH_2=C_{\cH_2}\,\bar{F}$,
\[
\PP\Big(|\cH_2-\EE[\cH_2]|\ge \tau\Big)
=
\PP\Big(|\bar{F}-\EE[\bar{F}]|\ge \tau/C_{\cH_2}\Big)
\le
2\exp\!\left(
-\frac{2M}{9}\left(\frac{\tau}{C_{\cH_2}}\right)^2
\right),
\]
which proves the tail bound. Setting the right-hand side equal to $\delta$ gives
\begin{align}
  |\cH_2-\EE[\cH_2]|
\le
C_{\cH_2}\sqrt{\frac{9}{2M}\log\frac{2}{\delta}}
=
\frac{12C_{G}^2}{\sigma_{t_l}^2\eta_{t_l}^2\rho}\frac{3}{\sqrt{2M}}\sqrt{\log\frac{2}{\delta}}, \label{eq:H_2_highprob} 
\end{align}
with probability at least $1-\delta$, completing the proof.
\end{proof}

\paragraph{Bounding $\cH_3$}
Set $\eta_{t_l}\leq 1/e$, by Lemma~\ref{lem:log_h_nabla_h} and Assumption~\ref{assump:bound_h_0}, we have
\begin{align}
    \cH_{3}&\leq  \int_{h(x_{t_l},{t_l})\leq \eta_{t_l} } \frac{8\sup_{x_{t_l}}\|\nabla_{{t_l}}\mu_{\theta}(x_{t_l},s_{\theta})\|^2}{\sigma_{t_l}^2}\log \frac1{h(x_{t_l},{t_l})} \frac{p_{\theta,{t_l}}(x_{t_l})h(x_{t_l},{t_l})}{\PP(\cE_{\bar X_0})}\diff x_{t_l}\notag\\
    &\leq \int_{h(x_{t_l},{t_l})\leq \eta_{t_l} } \frac{8C_G^2}{\sigma_{t_l}^2}\log \frac1{\eta_{t_l}} \frac{p_{\theta,{t_l}}(x_{t_l})\eta_{t_l}}{\PP(\cE_{\bar X_0})}\diff x_{t_l}\notag\\
    &\leq \frac{8C_G^2\eta_{t_l}}{\rho\sigma_{t_l}^2}\log \frac1{\eta_{t_l}} \label{eq:result_h21}.
\end{align}
Putting all together,
\begin{align*}
    &\EE_{X_{t_l}\sim P^h_{\theta,{t_l}}}\left[ \norm{\nabla \log \hat{h}(X_{t_l},{t_l})-\nabla \log h(X_{t_l},{t_l})}_2^2 \right]\\
    \leq& \cH_1+\cH_2+\cH_3\\
    \leq& \cH_3+\EE_{z_{\mathrm{mc}}}[\cH_1+\cH_2]+|\cH_1-\EE_{z_{\mathrm{mc}}}[\cH_1] |+|\cH_2-\EE_{z_{\mathrm{mc}}}[\cH_2] |\\
    \leq &\frac{8C_{G}^2\eta_{t_l}}{\rho\sigma_{t_l}^2}\log \frac1{\eta_{t_l}}+\frac{8d C_{G}^2}{\eta_{t_l}^2\sigma_{t_l}^2M}+\frac{C_{G}^2}{M\sigma_{t_l}^2\eta_{t_l}^2\rho}+|\cH_1-\EE_{z_{\mathrm{mc}}}[\cH_1] |+|\cH_2-\EE_{z_{\mathrm{mc}}}[\cH_2]|.
\end{align*}
By high probability bounds for $|\cH_1-\EE_{z_{\mathrm{mc}}}[\cH_1] |$ and $|\cH_2-\EE_{z_{\mathrm{mc}}}[\cH_2]|$ in \eqref{eq:h_1_highprob} and \eqref{eq:H_2_highprob}, the following inequality holds with probability at least $1-2\delta$,
\begin{align*}
    &\EE_{X_{t_l}\sim P^h_{\theta,{t_l}}}\left[ \norm{\nabla \log \hat{h}(X_{t_l},{t_l})-\nabla \log h(X_{t_l},{t_l})}_2^2 \right]\\
    \lesssim &\frac{C_{G}^2\eta_{t_l}}{\rho\sigma_{t_l}^2}\log \frac1{\eta_{t_l}}+\frac{d C_{G}^2}{\eta_{t_l}^2\sigma_{t_l}^2M}+\frac{C_{G}^2}{M\sigma_{t_l}^2\eta_{t_l}^2\rho}+\frac{C_{G}^2}{\sigma_{t_l}^2\eta_{t_l}^2\rho}\,
\sqrt{\frac{\log(2/\delta)}{M}}\\
+&\frac{\,C_{G}^2R^2}{\eta_{t_l}^2\sigma_{t_l}^2}\sqrt{\frac{\log(4/\delta)}{M}}\\
\lesssim& \frac{\eta_{t_l}}{\sigma_{t_l}^2}\log \frac1{\eta_{t_l}}+\frac{ 1}{\eta_{t_l}^2\sigma_{t_l}^2}\left(\frac1M+\sqrt{\frac{\log(2/\delta)}{M}}+\,\log \frac{2M}{\delta}\sqrt{\frac{\log(4/\delta)}{M}}\right)\\
\lesssim&\frac{\eta_{t_l}}{\sigma_{t_l}^2}\log \frac1{\eta_{t_l}}+\frac{ 1}{\eta_{t_l}^2\sigma_{t_l}^2}\left(\,\log \frac{2M}{\delta}\sqrt{\frac{\log(4/\delta)}{M}}\right).
\end{align*}
Here, $M$ is sufficiently large to satisfy $M \geq C_{M}$, where $C_{M}$ depends on $C_{G}$ and other relevant constants.
Choose $\eta_{t_l}=\min\left(\frac{1}{M^{1/6}},\frac{1}e\right)=M^{-1/6}$, then 
\begin{align*}
    &\EE_{X_{t_l}\sim P^h_{\theta,t_l}}\left[ \norm{\nabla \log \hat{h}(X_{t_l},{t_l})-\nabla \log h(X_{t_l},{t_l})}_2^2 \right]\\\lesssim&\frac{1}{\sigma_{t_l}^2M^{1/6}}\log M+\frac{ 1}{M^{1/6}\sigma_{t_l}^2}\left(\,\log \frac{2M}{\delta}\sqrt{\log(4/\delta)}\right)\\
    \lesssim&\frac{ 1}{M^{1/6}\sigma_{t_l}^2}\left(\,\log \frac{M}{\delta}\sqrt{\log(1/\delta)}\right)
\end{align*}
holds with probability at least $1-2\delta$. It also imply
\begin{align*}
    &\EE_{X_{t_l}\sim P^h_{\theta,{t_l}}}\left[ \norm{\nabla \log \hat{h}(X_{t_l},{t_l})-\nabla \log h(X_{t_l},{t_l})}_2^2 \right]\\=&~\cO\left(\frac{1}{\sigma_{t_l}^2}\frac{ \log \frac{M}{\delta}\sqrt{\log(1/\delta)}}{M^{1/6}}\right)
\end{align*}
holds with probability at least $1-\delta $.
\end{proof}
\subsection{Proof of Theorem~\ref{thm:tv_decomposition}}\label{append:tv_decomposition}
\begin{proof}
Since total variation satisfies the triangle inequality, we have 
\begin{align*}
    \mathrm{TV}\!\left(P_{\mathrm{data}}(\cdot|\cE_{X_0}), \hat P_{\theta}^{h}\right)
\;\le\;
\mathrm{TV}\!\left(P_{\mathrm{data}}(\cdot|\cE_{X_0}), P_{\theta}(\cdot|\cE_{\bar X_0})\right)
\;+\;
\mathrm{TV}\!\left(P_{\theta}(\cdot|\cE_{\bar X_0}), \hat P_{\theta}^{h}\right).
\end{align*}
\noindent
The term $\mathrm{TV}\!\left(P_{\mathrm{data}}(\cdot|\cE_{X_0}), P_{\theta}(\cdot|\cE_{\bar X_0})\right)$ measures the modeling error between the ideal conditional data distribution and the conditional generated
distribution, while
$\mathrm{TV}\!\left(P_{\theta}(\cdot|\cE_{\bar X_0}), \hat P_{\theta}^{h}\right)$ measures the sampling error between the conditional generated
distribution
and the output distribution of \algname{}.

\paragraph{Bounding $\mathrm{TV}\!\left(P_{\mathrm{data}}(\cdot|\cE_{X_0}), P_{\theta}(\cdot|\cE_{\bar X_0})\right)$}
Define $h_{\mathrm{data}}$ as the corresponding $h$-function defined by $\cE_{X_0}$.
By definition, $\cE_{X_0}$ are defined analogously by replacing $\bar X_0$ with $X_0$, then $h_{\mathrm{data}}(x,0)=h(x,0)$.
\begin{align*}
\mathrm{TV}\!\left(P_{\mathrm{data}}(\cdot|\cE_{X_0}), P_{\theta}(\cdot|\cE_{\bar X_0})\right)
&= \frac12 \int_{x} \left| \frac{h(x,0)p_{\theta,0}(x)}{\PP(\cE_{\bar X_0})} - \frac{p_{\mathrm{data}}(x)h_{\mathrm{data}}(x,0)}{\PP(\cE_{X_0})} \right|\,\diff x \\
&= \frac{1}{2\,\PP(\cE_{\bar X_0})\PP(\cE_{X_0})}
   \int_{x} h(x,0)\left| p_{\theta,0}(x)\PP(\cE_{X_0})- p_{\mathrm{data}}(x)\PP(\cE_{\bar X_0}) \right|\diff x\\
&\leq \frac{1}{2\,\PP(\cE_{\bar X_0})}
   \int_{x} \left| p_{\theta,0}(x)- p_{\mathrm{data}}(x) \right|\diff x\\
   &+\frac{1}{2\,\PP(\cE_{\bar X_0})\PP(\cE_{X_0})}\int \left| \PP(\cE_{\bar X_0})- \PP(\cE_{X_0})\right|h(x,0)p_{\mathrm{data}}(x)\diff x\\
&\lesssim \frac{1}{\rho}\mathrm{TV}(P_{\mathrm{data}},P_{\theta}).
\end{align*}
Here $p_{\mathrm{data}}(x)$ is the density of $P_{\mathrm{data}}$. 
\paragraph{Bounding $\mathrm{TV}\!\left(P_{\theta}(\cdot|\cE_{\bar X_0}), \hat P_{\theta}^{h}\right)$}
By definition, $P_{\theta}(\cdot|\cE_{\bar X_0})$ is the terminal distribution of \eqref{eq:piecewise_sde_corrected}, 
\begin{align}
\diff \bar{X}_t^h
\!=\!
\left[-\tfrac{1}{2}\bar{X}_{t}^h - s_{\theta}(\bar{X}_{t_l}^h,t_l)-\nabla \log h(\bar{X}_{t}^h,t)\right]\diff t +\diff \overline{W}_t,
\end{align}
and $\bar X_0^h\sim P_{\theta}(\cdot|\cE_{\bar X_0})$.

Meanwhile $\hat{P}_{\theta}^{h}$ is the terminal distribution of the following SDE,
\begin{align}
\diff \bar{X}_t^h
\!=\!
\left[\!-\tfrac{1}{2}\bar{X}_{t}^h - s_{\theta}(\bar{X}_{t_l}^h,t_l)-\nabla \log \hat{h}(\bar{X}_{t_l}^h,t_l)\right]\!\diff t +\diff \overline{W}_t,
\end{align}
where $\bar X_0^h\sim \hat P_{\theta}^h$.

We use Theorem 9 in \citep{chen2022sampling} to bound $\mathrm{TV}\!\left(P_{\theta}(\cdot|\cE_{\bar X_0}), \hat P_{\theta}^{h}\right)$.
Firstly we check 
\begin{align*}
    &\;\sum_{l=1}^L \EE_{\,\PP_{\theta}^h}\int_{t_{l-1}}^{t_l}\left[\|\nabla \log \hat{h}(\bar{X}_{t_l}^h,t_l)-\nabla \log h(\bar{X}_{t}^h,t)\|_2^2\right]\diff t\\
    \leq &\;2\sum_{l=1}^L \EE_{\,\PP_{\theta}^h}\int_{t_{l-1}}^{t_l}\left[\|\nabla \log h(\bar{X}_{t_l}^h,t_l)-\nabla \log h(\bar{X}_{t}^h,t)\|_2^2\right]\diff t\\
    + &\; 2\sum_{l=1}^L \EE_{\,\PP_{\theta}^h}\int_{t_{l-1}}^{t_l}\left[\|\nabla \log h(\bar{X}_{t_l}^h,t_l)-\nabla \log \hat{h}(\bar{X}_{t_l}^h,t_l)\|_2^2\right]\diff t\\
    \lesssim &\; \varepsilon_{\mathrm{dis}}T+\frac{ \log \frac{LM}{\delta}\sqrt{\log(L/\delta)}}{M^{1/6}}\frac{T}{L}\sum_{l=1}^L  \sigma_{t_l}^{-2}\\
    <&\;\infty
\end{align*}
holds with probability at $1-\frac{\delta L}{L}=1-\delta$ by Lemma~\ref{thm:mc_estimation} (For each discretization index $l\in\{1,\dots,L\}$, the MC approximation bound holds with probability at least $1-\delta/L$).

Then using the same argument in Theorem 9 in \citep{chen2022sampling}, we can conclude
\begin{align*}
    \mathrm{KL}(P_{\theta}(\cdot|\cE_{\bar X_0}),\hat{P}_{\theta}^h)
    &\lesssim\varepsilon_{\mathrm{dis}}T+\frac{ \log \frac{LM}{\delta}\sqrt{\log(L/\delta)}}{M^{1/6}}\frac{T}{L}\sum_{l=1}^L  \sigma_{t_l}^{-2}\\
    &\lesssim\varepsilon_{\mathrm{dis}}T+\frac{\kappa_{\sigma} \log \frac{M}{\delta}\sqrt{\log(1/\delta)}}{M^{1/6}},
\end{align*}
where $\frac{T}{L}\sum_{l=1}^L 1/\sigma_{t_l}^2$ denotes as $\kappa_{\sigma}$.
By Pinsker’s Inequality,
\begin{align*}
    \mathrm{TV}(P_{\theta}(\cdot|\cE_{\bar X_0}),\hat{P}_{\theta}^h)&\lesssim\sqrt{\mathrm{KL}(P_{\theta}(\cdot|\cE_{\bar X_0}),\hat{P}_{\theta}^h)}\\
    &\lesssim \sqrt{\varepsilon_{\mathrm{dis}}T+\frac{\kappa_{\sigma} \log \frac{M}{\delta}\sqrt{\log(1/\delta)}}{M^{1/6}}}
\end{align*}
holds with probability at $1-\delta$.

Then we can conclude
\begin{align*}
    \mathrm{TV}\!\left(P_{\mathrm{data}}(\cdot|\cE_{X_0}), \hat P_{\theta}^{h}\right)
\;&\le\;
\mathrm{TV}\!\left(P_{\mathrm{data}}(\cdot|\cE_{X_0}), P_{\theta}(\cdot|\cE_{\bar X_0})\right)
\;+\;
\mathrm{TV}\!\left(P_{\theta}(\cdot|\cE_{\bar X_0}), \hat P_{\theta}^{h}\right)\\
&\lesssim\;\frac{1}{\rho}\,\mathrm{TV}(P_{\mathrm{data}}, P_{\theta})+\sqrt{\varepsilon_{\mathrm{dis}}T+\frac{\kappa_{\sigma} \log \frac{M}{\delta}\sqrt{\log(1/\delta)}}{M^{1/6}}}.
\end{align*}
\end{proof}
\subsection{Proof of Lemma~\ref{lem:log_h_nabla_h}}\label{app:proof_log_h_nabla_h}
\begin{proof}
By definition, $h(x_{t_l},{t_l})=\int h(x_{t_{l-1}},{t_{l-1}})\phi_{\theta}(x_{t_{l-1}}|x_{t_l}) \diff x_{t_{l-1}}$.

We define the reweighed distribution $Q_{\theta}(X_{t_{l-1}}|X_{t_l})$ with density $q_{\theta}(x_{t_{l-1}}|x_{t_l})=\frac{h(x_{t_{l-1}},{t_{l-1}})\phi_{\theta}(x_{t_{l-1}}|x_{t_l})}{h(x_{t_l},{t_l})}$, then we have
\begin{align}
    \nabla_{x_{t_l}}\log h(x_{t_l},t_l)&=\int\frac{1}{\sigma_{t_l}^2}\nabla_{x_{t_l}}(\mu_{t_l}(x_{t_l},s_{\theta}))(x_{t_{l-1}}-\mu_{t_l}(x_{t_l},s_{\theta}))q_{\theta}(x_{t_{l-1}}|x_{t_l}) \diff{x_{t_{l-1}}}  \notag\\
    &=\frac{1}{\sigma_{t_l}^2}\nabla_{x_{t_l}}(\mu_{t_l}(x_{t_l},s_{\theta}))\left[\EE_{X_{t_{l-1}}\sim Q_{\theta}(\cdot|x_{t_l})}[X_{t_{l-1}}]-\mu_{t_l}(x_{t_l},s_{\theta})\right]  \notag \\
    &=\frac{1}{\sigma_{t_l}^2}\nabla_{x_{t_l}}(\mu_{t_l}(x_{t_l},s_{\theta}))\left[\EE_{X_{t_{l-1}}\sim Q_{\theta}(\cdot|x_{t_l})}[X_{t_{l-1}}]-\EE_{X_{t_{l-1}}\sim \Phi(\cdot|x_{t_l})}[X_{t_{l-1}}]\right] \label{ineq:logh_to_W2},
\end{align}
where $\Phi(\cdot|x_{t})$ is $\cN(\mu_{t_l}(x_{t_l},s_{\theta}),\sigma_{t_l}^2 I)$.

It's obvious that $q_{\theta}\ll\phi_{\theta}$. By Talagrand’s transportation inequality \citep{talagrand1996transportation,djellout2004transportation,xie2025constrained}, we have
\begin{align}
    \|\EE_{X_{t_{l-1}}\sim Q_{\theta}(\cdot|x_{t_l})}[X_{t_{l-1}}]-\EE_{X_{t_{l-1}}\sim \Phi(\cdot|x_{t_l})}[X_{t_{l-1}}]\|_2^2 &\leq W_2^2(Q_{\theta}(\cdot|x_{t_l}),\Phi_{\theta}(\cdot|x_{t_l}))\leq 2\sigma_{t_l}^2 \mathrm{KL}(Q_{\theta}(\cdot|x_{t_l}),\Phi_{\theta}(\cdot|x_{t_l})) \label{ineq:W2_to_KL}.
\end{align}
Now we turn to compute the KL divergence between $Q_{\theta}(\cdot|x_{t_l})$ and $\Phi_{\theta}(\cdot|x_{t_l})$
\begin{align}
    \mathrm{KL}(Q_{\theta}(\cdot|x_{t_l})||\Phi_{\theta}(\cdot|x_{t_l}))&=\int q_{\theta}(x_{t_{l-1}}|x_{t_l}) \log \frac{q_{\theta}(x_{t_{l-1}}|x_{t_l})}{\phi_{\theta}(x_{t_{l-1}}|x_{t_l})} \diff x_{t_{l-1}} \notag\\
    &=\int q_{\theta}(x_{t_{l-1}}|x_{t_l}) \log \frac{h(x_{t_{l-1}},t_{l-1})}{h(x_{t_l},t_l)} \diff x_{t_{l-1}} \notag\\
    &=\int q_{\theta}(x_{t_{l-1}}|x_{t_l}) (\log h(x_{t_{l-1}},t_{l-1})-\log h(x_{t_l},t_l)) \diff x_{t_{l-1}} \notag\\
    &\leq -\log h(x_{t_l},t_l).\label{ineq:KL_pq}
\end{align}
By combining \eqref{ineq:logh_to_W2}, \eqref{ineq:W2_to_KL}, \eqref{ineq:KL_pq}, we can conclude
\begin{align*}
    \|\nabla_{x_{t_l}}\log h(x_{t_l},t_l)\|_2\leq \frac{\|\nabla_{x_{t_l}}(\mu_{t_l}(x_{t_l},s_{\theta}))\|_2}{\sigma_{t_l}}\sqrt{2\log \frac1{h(x_{t_l},t_l)}}.
\end{align*}
\end{proof}
\section{Experimental Details and Additional Results}

\subsection{Formal Construction of the Conditioning Event $\mathcal{E}_{\bar X_0}$}
\label{app:event_construction}

In Section~\ref{sec:practical_ver}, we defined the terminal $h$-function as $h(x,0) \propto \exp(r(x)/\tau)$. Here, we rigorously define the event $\mathcal{E}_{\bar X_0}$ that induces this function by invoking Lemma~\ref{lem:bounded_tilting_via_event}.

The target distribution is defined via exponential tilting as $q(x) \propto p_{\theta,0}(x)\exp(r(x)/\tau)$. To satisfy the boundedness condition in Lemma~\ref{lem:bounded_tilting_via_event}, we identify the upper bound of the unnormalized density ratio:
\[
\frac{q(x)}{p_{\theta,0}(x)} \propto \exp\left(\frac{r(x)}{\tau}\right) \le \exp\left(\frac{r_{\max}}{\tau}\right) =C_q,
\]
where $r_{\max} \ge \sup_x r(x)$.
Following Lemma~\ref{lem:bounded_tilting_via_event}, we introduce an auxiliary variable $U \sim \mathrm{Unif}(0,1)$ and define the event:
\begin{align*}
    \mathcal{E}_{\bar X_0} = \left\{ U \le \frac{\exp(r(\bar X_0)/\tau)}{C_q} \right\} = \left\{ U \le \exp\left(\frac{r(\bar X_0) - r_{\max}}{\tau}\right) \right\}.
\end{align*}
The resulting $h$-function is then:
\begin{align*}
    h(x, 0) = \mathbb{P}(\mathcal{E}_{\bar X_0} \mid \bar X_0 = x)
            = \exp\left(\frac{r(x) - r_{\max}}{\tau}\right)
            \propto \exp\left(\frac{r(x)}{\tau}\right).
\end{align*}
This confirms that our choice of $h$-function corresponds to a valid conditioning event under the bounded reward assumption.
\subsection{Experiments on Improving Aesthetic Scores}
\label{app:exp_sd_1.5}

\begin{figure}[htbp]
    \centering
    \includegraphics[width=0.6\linewidth]{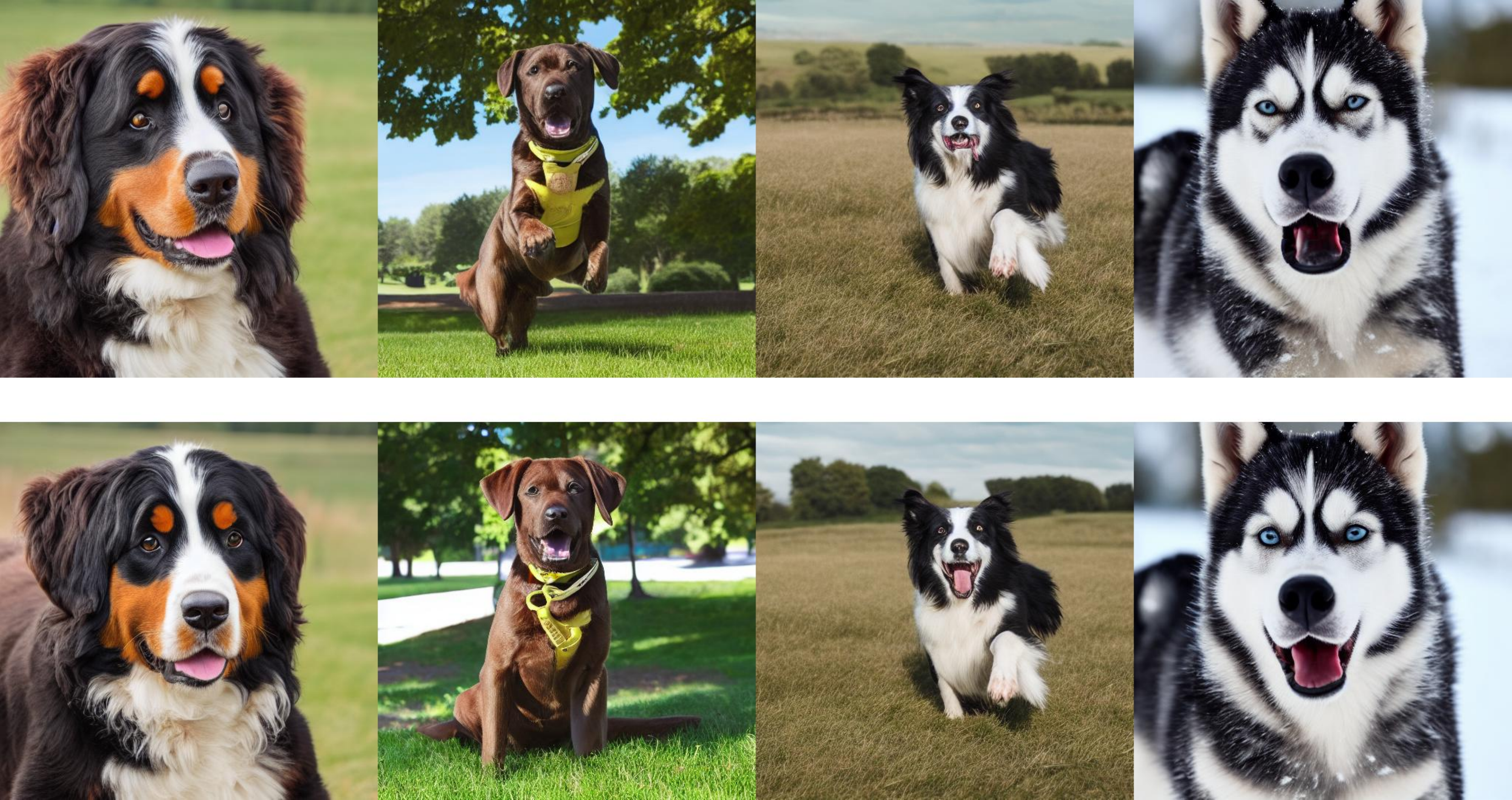}
    \caption{Comparison of dog images generated by \texttt{Stable Diffusion v1.5}. The upper row displays samples from vanilla generation process, while the bottom row shows samples guided by \algname{} using the aesthetic score as the reward.}
    \label{fig:demo}
\end{figure}

\subsubsection{Experimental Setup}
For the generation process, we utilize $L=20$ diffusion steps, employing the Euler ancestral sampler~\citep{karras2022elucidating}. To balance correction effectiveness with computational cost, Doob correction is applied only up to a cutoff timestep $l^*=10$. The approximation of $\nabla \log h$ is performed using $M=32$ Monte Carlo samples per step.

\subsubsection{Ablation Study on $\tau$ and $\gamma$}
We conduct an ablation study to analyze the impact of the temperature parameter $\tau$ and the strength $\gamma$. Tables~\ref{tab:min_score} through~\ref{tab:max_score} detail the statistical summaries of the generated aesthetic scores (minimum, quantiles, mean, and maximum) across various $(\tau, \gamma)$ configurations.

Our analysis reveals a trade-off between optimization intensity and stability. Specifically, the combination of a low temperature (small $\tau$) and high correction strength (large $\gamma$) results in aggressive optimization. However, overly aggressive settings (e.g., $\tau=0.2, \gamma=12$) can destabilize the mechanism, leading to potential collapse. Based on these empirical results, we identify a robust operating regime with $\tau \in [0.3, 0.4]$ and $\gamma \in [4.0, 8.0]$.

It is important to note that these optimal hyperparameters are specific to the current configuration. Altering the base model or sampler (e.g., using DDIM with stochasticity parameter $\eta > 0$) or the reward feedback may shift the stability region, necessitating further hyperparameter tuning.

\begin{table}[!htbp]
\centering
\small
\caption{Minimum aesthetic score.}
\begin{tabular}{c|cccc}
\hline
Doob $\gamma$ $\backslash$ $\tau$ & 0.2 & 0.3 & 0.4 & 0.5 \\
\hline
0.0 & 5.688 ± 0.142 & 5.688 ± 0.142 & 5.688 ± 0.142 & 5.688 ± 0.142 \\
1.0 & 5.728 ± 0.242 & 5.763 ± 0.116 & 5.733 ± 0.128 & 5.736 ± 0.137 \\
2.0 & 5.646 ± 0.417 & 5.740 ± 0.238 & 5.752 ± 0.152 & 5.741 ± 0.140 \\
3.0 & 5.782 ± 0.414 & 5.856 ± 0.227 & 5.760 ± 0.245 & 5.791 ± 0.167 \\
4.0 & 5.665 ± 0.345 & 5.757 ± 0.256 & 5.804 ± 0.218 & 5.794 ± 0.185 \\
5.0 & 5.004 ± 0.316 & 5.821 ± 0.226 & 5.877 ± 0.195 & 5.836 ± 0.263 \\
6.0 & 4.432 ± 0.159 & 5.921 ± 0.137 & 5.893 ± 0.176 & 5.847 ± 0.247 \\
7.0 & 3.856 ± 0.469 & 5.661 ± 0.416 & 5.870 ± 0.170 & 5.907 ± 0.188 \\
8.0 & 3.717 ± 0.239 & 5.480 ± 0.495 & 5.868 ± 0.209 & 5.840 ± 0.134 \\
9.0 & 3.094 ± 0.607 & 5.392 ± 0.177 & 5.971 ± 0.096 & 5.901 ± 0.159 \\
10.0 & 3.188 ± 0.216 & 4.389 ± 0.148 & 5.837 ± 0.052 & 5.638 ± 0.324 \\
11.0 & 3.296 ± 0.238 & 4.264 ± 0.465 & 5.763 ± 0.045 & 5.967 ± 0.073 \\
12.0 & 2.967 ± 0.137 & 4.157 ± 0.239 & 5.250 ± 0.351 & 5.948 ± 0.106 \\

\hline
\end{tabular}

\label{tab:min_score}
\end{table}

\begin{table}[!htbp]
\centering
\small
\caption{First quartile (Q1) of aesthetic score.}
\begin{tabular}{c|cccc}
\hline
Doob $\gamma$ $\backslash$ $\tau$ & 0.2 & 0.3 & 0.4 & 0.5 \\
\hline
0.0 & 6.157 ± 0.041 & 6.157 ± 0.041 & 6.157 ± 0.041 & 6.157 ± 0.041 \\
1.0 & 6.200 ± 0.050 & 6.199 ± 0.038 & 6.193 ± 0.038 & 6.188 ± 0.034 \\
2.0 & 6.270 ± 0.042 & 6.216 ± 0.040 & 6.209 ± 0.031 & 6.208 ± 0.026 \\
3.0 & 6.368 ± 0.019 & 6.282 ± 0.022 & 6.244 ± 0.032 & 6.235 ± 0.037 \\
4.0 & 6.433 ± 0.028 & 6.323 ± 0.017 & 6.272 ± 0.024 & 6.265 ± 0.031 \\
5.0 & 6.252 ± 0.101 & 6.393 ± 0.008 & 6.311 ± 0.011 & 6.282 ± 0.015 \\
6.0 & 5.816 ± 0.250 & 6.452 ± 0.028 & 6.361 ± 0.024 & 6.310 ± 0.012 \\
7.0 & 5.400 ± 0.119 & 6.453 ± 0.032 & 6.371 ± 0.012 & 6.319 ± 0.030 \\
8.0 & 4.920 ± 0.117 & 6.422 ± 0.036 & 6.452 ± 0.043 & 6.358 ± 0.011 \\
9.0 & 4.691 ± 0.104 & 6.210 ± 0.142 & 6.451 ± 0.029 & 6.388 ± 0.028 \\
10.0 & 4.385 ± 0.088 & 5.790 ± 0.223 & 6.484 ± 0.032 & 6.433 ± 0.049 \\
11.0 & 4.300 ± 0.067 & 5.477 ± 0.103 & 6.408 ± 0.040 & 6.471 ± 0.035 \\
12.0 & 3.989 ± 0.078 & 5.275 ± 0.144 & 6.255 ± 0.038 & 6.463 ± 0.032 \\

\hline
\end{tabular}

\label{tab:q1_score}
\end{table}

\begin{table}[!htbp]
\centering
\small
\caption{Mean aesthetic score.}
\begin{tabular}{c|cccc}
\hline
Doob $\gamma$ $\backslash$ $\tau$ & 0.2 & 0.3 & 0.4 & 0.5 \\
\hline
0.0 & 6.357 ± 0.029 & 6.357 ± 0.029 & 6.357 ± 0.029 & 6.357 ± 0.029 \\
1.0 & 6.452 ± 0.025 & 6.421 ± 0.025 & 6.403 ± 0.024 & 6.396 ± 0.026 \\
2.0 & 6.525 ± 0.033 & 6.472 ± 0.023 & 6.453 ± 0.024 & 6.434 ± 0.024 \\
3.0 & 6.600 ± 0.037 & 6.537 ± 0.028 & 6.490 ± 0.018 & 6.470 ± 0.023 \\
4.0 & 6.645 ± 0.022 & 6.576 ± 0.033 & 6.520 ± 0.028 & 6.498 ± 0.018 \\
5.0 & 6.476 ± 0.097 & 6.635 ± 0.036 & 6.569 ± 0.035 & 6.527 ± 0.023 \\
6.0 & 6.098 ± 0.123 & 6.691 ± 0.041 & 6.611 ± 0.042 & 6.559 ± 0.037 \\
7.0 & 5.736 ± 0.057 & 6.696 ± 0.077 & 6.652 ± 0.039 & 6.584 ± 0.035 \\
8.0 & 5.351 ± 0.099 & 6.631 ± 0.057 & 6.691 ± 0.050 & 6.625 ± 0.036 \\
9.0 & 5.042 ± 0.074 & 6.393 ± 0.089 & 6.702 ± 0.036 & 6.643 ± 0.042 \\
10.0 & 4.823 ± 0.082 & 6.054 ± 0.079 & 6.695 ± 0.040 & 6.671 ± 0.041 \\
11.0 & 4.698 ± 0.019 & 5.853 ± 0.062 & 6.596 ± 0.047 & 6.707 ± 0.036 \\
12.0 & 4.440 ± 0.066 & 5.674 ± 0.067 & 6.442 ± 0.049 & 6.701 ± 0.036 \\

\hline
\end{tabular}

\label{tab:mean_score}
\end{table}

\begin{table}[!htbp]
\centering
\small
\caption{Third quartile (Q3) of aesthetic score.}
\begin{tabular}{c|cccc}
\hline
Doob $\gamma$ $\backslash$ $\tau$ & 0.2 & 0.3 & 0.4 & 0.5 \\
\hline
0.0 & 6.560 ± 0.037 & 6.560 ± 0.037 & 6.560 ± 0.037 & 6.560 ± 0.037 \\
1.0 & 6.674 ± 0.027 & 6.635 ± 0.031 & 6.617 ± 0.030 & 6.596 ± 0.047 \\
2.0 & 6.787 ± 0.013 & 6.676 ± 0.023 & 6.655 ± 0.048 & 6.639 ± 0.051 \\
3.0 & 6.798 ± 0.080 & 6.736 ± 0.039 & 6.691 ± 0.031 & 6.679 ± 0.057 \\
4.0 & 6.852 ± 0.065 & 6.846 ± 0.070 & 6.728 ± 0.041 & 6.691 ± 0.035 \\
5.0 & 6.810 ± 0.054 & 6.857 ± 0.028 & 6.831 ± 0.054 & 6.719 ± 0.038 \\
6.0 & 6.536 ± 0.017 & 6.951 ± 0.039 & 6.852 ± 0.062 & 6.774 ± 0.034 \\
7.0 & 6.208 ± 0.084 & 6.897 ± 0.071 & 6.908 ± 0.013 & 6.816 ± 0.054 \\
8.0 & 5.860 ± 0.094 & 6.866 ± 0.062 & 6.925 ± 0.083 & 6.871 ± 0.059 \\
9.0 & 5.514 ± 0.090 & 6.655 ± 0.078 & 6.980 ± 0.100 & 6.884 ± 0.057 \\
10.0 & 5.290 ± 0.102 & 6.432 ± 0.070 & 6.950 ± 0.046 & 6.918 ± 0.101 \\
11.0 & 5.097 ± 0.050 & 6.286 ± 0.069 & 6.838 ± 0.086 & 6.909 ± 0.084 \\
12.0 & 4.907 ± 0.060 & 6.172 ± 0.060 & 6.732 ± 0.064 & 6.899 ± 0.027 \\

\hline
\end{tabular}

\label{tab:q3_score}
\end{table}

\begin{table}[!htbp]
\centering
\small
\caption{Maximum aesthetic score.}

\begin{tabular}{c|cccc}
\hline
Doob $\gamma$ $\backslash$ $\tau$ & 0.2 & 0.3 & 0.4 & 0.5 \\
\hline
0.0 & 7.028 ± 0.091 & 7.028 ± 0.091 & 7.028 ± 0.091 & 7.028 ± 0.091 \\
1.0 & 7.161 ± 0.139 & 7.125 ± 0.148 & 7.085 ± 0.111 & 7.078 ± 0.115 \\
2.0 & 7.291 ± 0.219 & 7.197 ± 0.185 & 7.164 ± 0.161 & 7.111 ± 0.165 \\
3.0 & 7.363 ± 0.192 & 7.296 ± 0.147 & 7.185 ± 0.169 & 7.183 ± 0.179 \\
4.0 & 7.292 ± 0.073 & 7.338 ± 0.187 & 7.251 ± 0.112 & 7.241 ± 0.156 \\
5.0 & 7.299 ± 0.125 & 7.385 ± 0.170 & 7.323 ± 0.133 & 7.288 ± 0.162 \\
6.0 & 6.966 ± 0.122 & 7.326 ± 0.156 & 7.320 ± 0.185 & 7.331 ± 0.210 \\
7.0 & 6.649 ± 0.154 & 7.407 ± 0.222 & 7.371 ± 0.161 & 7.302 ± 0.169 \\
8.0 & 6.699 ± 0.054 & 7.310 ± 0.094 & 7.498 ± 0.301 & 7.333 ± 0.161 \\
9.0 & 6.446 ± 0.065 & 7.131 ± 0.185 & 7.463 ± 0.292 & 7.299 ± 0.153 \\
10.0 & 6.210 ± 0.547 & 6.936 ± 0.093 & 7.273 ± 0.147 & 7.387 ± 0.064 \\
11.0 & 5.835 ± 0.069 & 6.850 ± 0.151 & 7.336 ± 0.081 & 7.314 ± 0.131 \\
12.0 & 5.516 ± 0.075 & 6.852 ± 0.025 & 7.179 ± 0.181 & 7.317 ± 0.175 \\
\hline
\end{tabular}
\label{tab:max_score}
\end{table}

\subsection{Experiments on Improving ImageReward with Reweighting and Resampling}
\label{app:exp_search}

Following \citet{zhang2025inference}, we build upon the official codebases of FK-Steering \citep{singhal2025general} and DAS \citep{kim2025test} to implement our baselines and sampling routines. We evaluate performance using the standard ImageReward prompt set \citep{singhal2025general}, utilizing $L=100$ total sampling steps and candidate set sizes of $K \in \{4, 8\}$. The sampler used in this experiment is chosen as DDIM~\citep{song2020denoising} with the stochasity parameter $\eta=1.0$. All results are reported as the mean and standard deviation over 4 independent trials.

For baseline configurations, we strictly adhere to the settings reported in \citet{zhang2025inference}, which replicate the original authors' implementations; we do not perform additional tuning for these baselines. Specifically, the BFS resampling schedule is defined by the interval $\mathcal{L} = [20, 80]$ with a frequency of $l_{\text{freq}} = 20$. We configure the BFS parameters as $\Omega_{\rm BFS} = (10, \mathtt{Increase}, \mathtt{Max}, \mathtt{SSP})$, corresponding to the search temperature, scoring method, buffer update strategy, and resampling algorithm, respectively. Regarding \algname{} -specific hyperparameters, we employ $M=32$ Monte Carlo samples, a cutoff time $l^*=20$, a temperature $\tau=0.2$, and a correction strength $\gamma=0.8$. Algorithm~\ref{alg:one-shot-stochastic-BFS} details the complete integration of our proposed method with the BFS framework.

\begin{algorithm}[t]
\caption{\algname{} + BFS Integration}
\label{alg:one-shot-stochastic-BFS}
\begin{algorithmic}[1]
\REQUIRE Pre-trained diffusion model $s_\theta(x,t)$; Doob correction strength $\gamma$;
Monte Carlo samples $M$;Time threshold $l^* \in [L]$;
Number of parallel trajectories $K$;
Resampling interval $\mathcal{L}$; Resampling frequency $l_{\rm freq}$; BFS parameters $\Omega_{\rm BFS}$.

\STATE Initialize resampling scores $\{b_{\mathrm{prev}}^{(k)}\}_{k=1}^K$.
\STATE Sample $\{x_{t_L}^{(k)}\}_{k=1}^K$ from $\mathcal{N}(0, I)$.

\FOR{$l = L, L-1, \dots, 1$}
    \IF{$l \in \mathcal{L}$ \textbf{and} $(l \pmod{l_{\rm freq}} == 0)$}
        \STATE $\{x_{t_l}^{(k)}\}_{k=1}^K, \{b_{\mathrm{prev}}^{(k)}\}_{k=1}^K \leftarrow \texttt{BFS}(\{x_{t_l}^{(k)}\}_{k=1}^K ; \Omega_{\mathrm{BFS}})$
    \ENDIF

    \FOR{$k=1, \dots, K$}
        \IF{$1<l \le l^*$}
        \STATE Sample $\{x_{t_{l-1}}^{(m)}\}_{m=1}^M$ from $\mathcal{N}\left(\mu_{t_l}(x_{t_l},s_\theta),\sigma^2_{t_l} I \right)$.
        \STATE Compute $\{\hat{x}_0^{(m)}\}_{m=1}^M$ via~\eqref{eq:x_0_surrogate}.
        \STATE Compute $\nabla \log \hat{h}(x^{(k)}_{t_l},t_l)$ in~\eqref{eq:mc_estimator} via $\{\hat{x}_0^{(m)}\}_{m=1}^M$.
            \STATE $\nabla \log \hat{p}_\theta^h(x_{t_l}^{(k)}) \leftarrow s_\theta(x_{t_l}^{(k)}, t_l) + \gamma\,\nabla \log \hat{h}(x_{t_l}^{(k)}, t_l)$.
            \STATE Sample $x_{t_{l-1}}$ from $ \mathcal{N}(\mu(x_{t_l},\nabla \log \hat{p}_\theta^h), \sigma^2_{t_l} I )$.
        \ELSE
            \STATE Sample $x_{t_{l-1}}$ from $\mathcal{N}(\mu(x_{t_l},s_\theta), \sigma^2_{t_l} I )$.
        \ENDIF
    \ENDFOR
\ENDFOR

\STATE \textbf{Return} $x_0 = \arg\max_{k \in \{1,\dots,K\}} r(x_{0}^{(k)})$.
\end{algorithmic}
\end{algorithm}

\subsection{Offline RL Experiments}
\label{app:exp_rl}

\paragraph{Setup.}
We adopt the experimental framework of \citet{lu2023contrastive}, utilizing their pre-trained diffusion policy and ground-truth Q-functions (as reward oracles). The diffusion model is conditioned on the state $s$ to generate actions $a$, employing a $L=15$ steps DDIM sampler.

We evaluate our method on the D4RL locomotion benchmark~\citep{fu2020d4rl}, which spans three MuJoCo environments  and three dataset compositions. Each environment represents a standard continuous-control task, while the datasets vary in the quality and diversity of the constituent trajectories.

\paragraph{Environments and Datasets.}
The specific tasks are defined as follows:
\begin{itemize}
    \item \textbf{HalfCheetah:} A planar running task that rewards maximizing forward velocity.
    \item \textbf{Hopper:} A one-legged hopping task requiring balance and forward progress.
    \item \textbf{Walker2d:} A bipedal walking task rewarding stable forward locomotion.
\end{itemize}

The datasets are categorized by the nature of the policy used for data collection:
\begin{itemize}
    \item \textbf{Medium-Expert:} A mixture of expert-level and medium-level policies’
decision data.
    \item \textbf{Medium:} Decision data generated by a single medium-level policy.
    \item \textbf{Medium-Replay:} Diverse decision data generated by a large set of medium-level policies.
\end{itemize}

\begin{table}[!htbp]
\centering
\small
\caption{Hyperparameter settings for our method across datasets and environments.
$\eta$ denotes the DDIM stochasticity parameter, $\gamma$ the Doob correction scale,
$\tau$ the temperature, $t^\star$ the correction start timestep,
and $K$ the number of candidate samples selected via best-of-$K$.}
\setlength{\tabcolsep}{6pt}
\renewcommand{\arraystretch}{1.15}
\begin{tabular}{llccccc}
\toprule
\textbf{Dataset} & \textbf{Environment} &
$\boldsymbol{\eta}$ &
$\boldsymbol{\gamma}$ &
$\boldsymbol{\tau}$ &
$\boldsymbol{t^\star}$ &
$\boldsymbol{K}$ \\
\midrule
Medium-Expert & HalfCheetah & 0.7 & 0.25 & 0.5 &  8  & 4 \\
Medium-Expert & Hopper      & 0.8 & 0.25 & 0.7 &  4  & 4 \\
Medium-Expert & Walker2d    & 0.4 & 0.5  & 0.4 & 10  & 8 \\
\midrule
Medium        & HalfCheetah & 0.2 & 0.25 & 0.5 & 10  & 4 \\
Medium        & Hopper      & 0.6 & 0.75 & 0.4 &  8  & 8 \\
Medium        & Walker2d    & 0.3 & 0.5  & 0.3 & 10  & 4 \\
\midrule
Medium-Replay & HalfCheetah & 0.4 & 0.5  & 0.3 & 10  & 8 \\
Medium-Replay & Hopper      & 0.2 & 0.5  & 0.5 &  6  & 4 \\
Medium-Replay & Walker2d    & 0.2 & 0.25 & 0.7 &  8  & 8 \\
\bottomrule
\end{tabular}

\label{tab:doob_hparams}
\end{table}

\paragraph{Hyperparameter Settings.}
To ensure a fair comparison, we align our evaluation protocol with \citet{zhang2025inference}, using their codebase and hyperparameter search strategy for baseline methods. A key metric for fairness is the computational budget, particularly for methods involving test-time search (sampling $K$ candidates and selecting the best-of-$K$).

The baselines employ various resource-intensive strategies: TFG~\citep{ye2024tfg} allows up to 8 recurrence steps; SVDD~\citep{li2024derivative} employs up to $K=16$ particles and $M=32$ Monte Carlo candidates, selecting the optimal temperature $\alpha$ from the set $\{0.0, 0.1, \dots, 0.6\}$; DAS~\citep{kim2025test} uses $K=16$ particles; and TTS~\citep{zhang2025inference} uses up to $K=4$ particles combined with optimized correction strength and iterative \& recurrence steps.
To maintain a comparable computational budget, we configure our method with $M=32$ lookahead samples (for gradient estimation) and generate up to $K=8$ final candidates. We note that unlike the baselines, our method does not require iterative recurrence or optimization steps during sampling.

We perform a hyperparameter search over the following grids: correction strength $\gamma \in \{0.25, 0.5, 0.75, 1\}$, temperature $\tau \in \{0.3,0.4,0.5,0.6,0.7\}$, and cutoff time $l^* \in \{4, 6, 8, 10\}$. Additionally, since \algname{}  and SVDD rely on the variance of the transition kernel (which vanishes in deterministic sampling), we tune the DDIM stochasticity parameter $\eta \in \{0.2,0.3,0.4,0.5,0.6,0.7,0.8\}$.  For fair comparison we evaluate our method on different seeds used for hyperparameter search. The optimal hyperparameters of \algname{} for each task are reported in Table~\ref{tab:doob_hparams}.

\end{document}